\documentclass[10pt]{article} 
\usepackage[preprint]{tmlr}

\usepackage{amsmath,amsfonts,bm}

\def\eqref#1{equation~\ref{#1}}

\def\1{\bm{1}}

\DeclareMathAlphabet{\mathsfit}{\encodingdefault}{\sfdefault}{m}{sl}
\SetMathAlphabet{\mathsfit}{bold}{\encodingdefault}{\sfdefault}{bx}{n}

\def\sR{{\mathbb{R}}}

\usepackage{hyperref}
\usepackage{url}

\usepackage{latexsym}
\usepackage[
    ruled,          
    vlined,         
    noend,          
    linesnumbered,  
    resetcount,     
]{algorithm2e}
\newcommand{\removelatexerror}{\let\@latex@error\@gobble}

\usepackage{microtype}

\usepackage{inconsolata}
\usepackage{booktabs} 

\usepackage{graphicx}
\usepackage{amsmath}
\usepackage{amssymb, amsthm}
\newtheorem{theorem}{Theorem}
\newtheorem{corollary}{Corollary}[theorem]

\theoremstyle{definition}

\newtheorem{assumption}{Assumption}

\usepackage{subcaption}

\usepackage{cuted}
\usepackage[center]{formal-grammar} 

\title{Hierarchical Latent Structures in Data Generation Process Unify Mechanistic Phenomena across Scale}

\author{
  \name Jonas Rohweder\thanks{These two authors contributed equally to this work.}
  \email jonas.rohweder@stud.tu-darmstadt.de\\
  \addr UKP Lab, TU Darmstadt\\
  \AND
  \name Subhabrata Dutta\footnotemark[1]
  \email subhabrata.dutta@tu-darmstadt.de\\
  \addr UKP Lab, TU Darmstadt\\
  \AND
  \name Iryna Gurevych 
  \email iryna.gurevych@tu-darmstadt.de\\
  \addr UKP Lab, TU Darmstadt
  }

\def\month{MM}  
\def\year{YYYY} 
\def\openreview{\url{https://openreview.net/forum?id=XXXX}} 

\begin{document}

\maketitle

\begin{abstract}
Contemporary studies in mechanistic interpretability have uncovered many puzzling phenomena in the neural information processing of Transformer-based language models, such as induction heads, function vectors, and the Hydra effect. Some of these individual phenomena have been independently tied to different data distributional properties, while some have been loosely associated with model architecture and how Transformers process information. However, a unified understanding of the relationship between data, model architecture, and optimization remains lacking, failing to answer the fundamental question: why do these three phenomena appear universally across different model families and scales, despite their seeming disconnect? In this work, we answer this question by unifying these three phenomena as consequences of hierarchical latent structures in the data generation process, coupled with decorrelated gradients across additive model components and directional concavity in the representation geometry. We validate our theoretical results in a toy model regime and in a large-scale synthetic data regime, comparing them with language models trained on natural language data.
\footnote{We make our code available at: \url{https://github.com/UKPLab/arxiv2026-hierarchical-latent-structures}.}
\end{abstract}

\section{Introduction}

Recent investigations aiming at representative disassembly of a language model, commonly termed under the broad umbrella of mechanistic interpretability~\citep{olah2022mechanistic}, have identified striking phenomena emerging within these models --- inference-time self-repair or the \textit{Hydra effect}~\cite{mcgrath2023hydra}, \textit{induction heads}~\citep{olssonIncontextLearningInduction2022}, \textit{function vectors}~\citep{todd2023function}. Each of these phenomena is universal to autoregressive language models trained on large natural language corpora; yet, we lack any framework to explain why they appear or an explanation for their coincidental emergence. 

We note that not all mechanistic phenomena act on the same scale/scope. For example, induction circuits are pertinent to local predictive structures of the form ${\tt [A][B]}\cdots {\tt [A]}\rightarrow{\tt [B]}$, where the prediction of ${\tt [B]}$ is conditioned on matching prefix {\tt [A]}. Function vectors are generalizations of induction-like mechanisms, in which the model compresses task-specific information from in-context evidence into reusable vector representations. The Hydra effect, on the other hand, is fundamental to how predictive information is processed in the different parts of the model architecture: ablation of a model component at inference time causes another component to compensate via increased predictive influence.

Behind this seeming disconnect, however, lies some intuitive commonality. The linkage between induction heads and function vectors is fairly straightforward. Both these phenomena explain in-context learning. Recent literature~\citep{chanDataDistributionalProperties2022} attribute the emergence of in-context learning to multiple data-distributional properties, such as burstiness, dynamic meaning, Zipfian distribution, etc. It is natural to assume that these properties control induction heads and function vectors as well. \citet{chanDataDistributionalProperties2022} note an interesting observation: while the mentioned data-specific properties elicit in-context learning, it is exclusive to Transformers and not recurrent architectures.

Prior work posit that \textit{copy suppression} causes Hydra effect: if model components in earlier layers predict a certain
token, and this token appears earlier in the context, certain attention heads (termed copy suppressing heads) suppresses that copied information~\citep{DBLP:journals/corr/abs-2310-04625}. When the suppressed head is ablated, suppression is not in place and hence, the predictive performance does not get hurt. This observation unravels a deeper connection between Hydra effect and in-context learning: for the copy suppression to successfully elicit Hydra effect, the model needs to have similar yet redundant evidence in the context, and must be able to infer useful information from such evidences. Such ability, as we discussed, are inherently linked to induction heads and function vectors.

To this end, we seek to unify the explanations of these three unique mechanistic properties under the umbrella of data distribution process, model architecture, and learning dynamics. The common theme of all three properties is the presence of shared semantics across different parts of the input. We model this shared semantics as hierarchically structured latents in the data generation process. We first develop a theoretical framework to attribute the role of hierarchy in the elicitation of induction heads, function vectors, and the Hydra effect (Section~\ref{sec:theory}). We verify the theoretical results on a strictly controlled toy model regime: Transformers trained on hierarchical data generation process exhibit the properties under consideration significantly better than a flat data generation process with same surface-level statistics (Section~\ref{sec:toy-model}). Finally, in Section~\ref{sec:large-models}, we perform large-scale comparison between Transformers trained on synthetic N-gram data (a flat process), synthetic data generated by Probabilistic Context-Sensitive Grammars or PCSGs (a hierarchical process) and the OLMo-1B checkpoints~\citep{DBLP:conf/acl/GroeneveldBWBKT24} (language model trained on natural language). Confirming the results drawn from the theoretical framework and toy model experiments, models trained on PCSG demonstrate the emergence of these three phenomena, similar to OLMo-1B, while the N-gram model fails to do so.

\newlength{\maxheight}


    
    

\section{Related Work}

{\bf Mechanistic phenomena in language models.} Early works in mechanistic interpretability, primarily popularized by \citet{olah2022mechanistic,elhageMathematicalFrameworkTransformer2021, olssonIncontextLearningInduction2022}, focus on reverse engineering the computation performed by the language model as circuits. Induction circuits~\citep{olssonIncontextLearningInduction2022} are the very first family of such mechanisms found in toy as well as large-scale Transformer-based language models: two-layer circuits that perform prefix-matching and copying. Subsequent work has studied how induction circuits form during training in a synthetic data regime~\citep{DBLP:conf/icml/SinghMHCS24}. Interestingly, they point toward a emergent redundancy of induction heads, though falling short in explaining the redundancy. \citet{chen2024unveiling} prove that gradient flow on Markov models with repeated evidence. \citet{olssonIncontextLearningInduction2022}originally linked induction heads as precursor to in-context learning. Toward identifying the mechanism of few-shot in-context learning, \citet{todd2023function} demonstrate the existence of function vectors (FVs): compact, causally effective representations of in-context examples localized in a small set of attention heads. Independent contemporary work by \citet{hendel-etal-2023-context} confirm the same as task vectors. ~~\citet{dong2026understanding} show that task
vectors naturally emerge in linear transformers trained on triplet-formatted prompts
through loss landscape analysis. Across many of these discussed work, along with orthogonal research into reverse engineering circuits responsible for indirect object identification~\cite{wangInterpretabilityWildCircuit2022}, presence of redundant mechanisms has been pointed out. \citet{mcgrath2023hydra} present a comprehensive analysis of redundancy elicited as Hydra effect: ablating an attention layer causes downstream layers to compensate. \citet{DBLP:journals/corr/abs-2310-04625} provide a partial mechanistic account of Hydra effect via copy suppression heads. \citet{Rushing2024selfrepair} attribute Hydra effect to two different mechanisms within the model: Layernorm scaling and anti-erasure MLP neurons.


{\bf Training dynamics} offer insight into emergent behaviors in neural networks, where gradual increases in scale or training progress lead to sudden qualitative changes in capability \citep{anderson1972more,weiEmergentAbilitiesLarge2022a}. Such phase transitions have been hypothesized to arise from the rapid formation of specialized circuits~\citep{michaudQuantizationModelNeural2024}. Transformer architecture has been shown to possess implicit biases of gradient flow that the near-optimal model is equipped with induction heads~\citep{chen2024unveiling}. \citet{zucchet2025how} study how factual recall is acquired over training. Despite isolated understandings of different such phenomena, a unified picture remains missing. Moreover, phenomena such as Hydra effect~\citep{mcgrath2023hydra} remain unexplored from the training dynamics and data distribution lens. 

{\bf Synthetic data for interpretability.}
Prior work has shown that model performance follows robust scaling laws determined not only by model size and compute but also by the structure of the training data~\citep{kaplanScalingLawsNeural2020,hoffmannTrainingComputeOptimalLarge2022}, highlighting a gap between expressibility and learnability that depends on data structure and optimization dynamics~\citep{weissThinkingTransformers2021,borensteinWhatLanguagesAre2025}. To study how data properties give rise to emergent behaviors, researchers have increasingly turned to synthetic datasets, which allow precise control over statistical and structural features while preserving key learning dynamics observed in natural language models~\citep{elhageToyModelsSuperposition2022,jain2023mechanistically}. However, many synthetic setups rely on flat or sequential data and fail to capture the hierarchical and recursive structure central to language. Probabilistic context-free grammars (PCFGs) offer a principled middle ground, exposing a transparent data generation process while retaining core linguistic properties, such as compositionality, ambiguity, and recursion. Previous work has used PCFGs to probe inductive biases~\citep{whiteExaminingInductiveBias2021}, derive exact comparisons between learned and true distributions~\citep{jumeletTransparencySourceEvaluating2023}, and study the acquisition of hierarchical syntax~\citep{allen-zhuInterpretabilityLanguageModels2024}. Most closely, \citet{schulzUnravelingSyntaxHow2025} show that transformers reduce loss across grammatical components in parallel yet struggle to learn deep recursion, suggesting optimization limits rather than architectural constraints. While these studies establish PCFGs as effective tools for analyzing linguistic competence, they largely stop short of mechanistically explaining how models internalize structure. 

\citet{hu-etal-2025-circuits} demonstrate a strong precedence to our case; they coin the term \textit{pre-pretraining} (training on formal language before natural languages) and found that hierarchical formal language data elicits linguistic generalization in natural language models.

While there have been prior effort to explain some of these mechanistic phenomena from, they remain of narrow focus: either they attribute individual phenomenon to different model components (e.g., induction heads, copy suppression heads, anti-erasure neurons, etc.) and fail to answer why they appear in the training process. Prior work that investigate the role of data distribution and training dynamics usually focus on a single phenomenon in a restrictive setting. On the contrary, we present the first theoretical and empirical unification of seemingly disconnected mechanistic phenomena as an immediate result of hierarchy in the data generation process.

\section{Theoretical Results}
\label{sec:theory}

In this section, we show that under realistic assumptions about the data generation process and model architecture, hierarchically persistent features are guaranteed to \textit{enforce} the emergence of induction, function vectors, and the Hydra effect in an autoregressive model trained via gradient descent.

Let $X = \left(x_1x_2\cdots x_n\right)$ be a sequence of tokens generated by a latent hierarchical process of depth $D\geq2$:
\[Z^{\left(0\right)}\rightarrow Z^{\left(1\right)}\rightarrow X\]
where $Z^{\left(0\right)}$ is a global latent variable, $Z^{\left(1\right)}$ is a local latent variable, and the joint distribution factorizes as:
\[p\left(X,Z^{\left(0\right)},Z^{\left(1\right)}\right)=p\left(Z^{\left(0\right)}\right) p\left(Z^{\left(1\right)}|Z^{\left(0\right)}\right)\prod_{t=1}^n p\left(x_t|Z^{\left(1\right)}_t,Z^{\left(0\right)}\right)\]

We make the following assumptions about the data generation process:
\begin{assumption}
    \label{as::regularity}
    \textit{Regularity.} All random variables are defined on standard Borel spaces, ensuring the existence of regular conditional distributions.
\end{assumption}
\begin{assumption}
\label{as::hier-dependence}
    \textit{Hierarchical dependence.} There exist tokens such that
    \[x_t\not\perp Z^{\left(0\right)}|Z^{\left(1\right)},\;\; x_t \not\perp Z^{\left(1\right)}|Z^{\left(0\right)}\]
    i.e., latents from different levels jointly dictate token probability.
\end{assumption}

\begin{assumption}
\label{as::non-degen}
    \textit{Non-degeneracy.} Latent variables influencing prediction are statistically recoverable from the input:
    \[I\left(Z^{\left(0\right)};X_{<t}\right)>0,\;\; I\left(Z^{\left(1\right)};X_{<t}|Z^{\left(0\right)}\right)>0\]
\end{assumption}
\begin{assumption}
\label{as::multi-evidence}
    \textit{Multiple evidence streams.} There exist disjoint index sets $S_1, S_2, \cdots\subset \{1,\cdots,n\}$ such that,
    \[X_{S_i}\rightarrow Z^{(1)}_i \rightarrow Z^{(0)}, \; \text{for } i=1,2\]
    and
    \[Z^{(1)}_1\perp Z^{(1)}_2 | Z^{(0)}\]
    i.e., two disjoint evidence sets shares latent via distinct local latents.
\end{assumption}

We consider an autoregressive model:
\[p(x_t|X_{<t})={\cal M} \left(X_{<t}\right) \in \sR^{V}\]
where a fixed-dimensional internal representation is mapped to logits via linear decoding $W_U$:
\[{\cal M} \left(X_{<t}\right) = W_Uh^{\left(L\right)}\left(X_{<t}\right)
\]
The final representation $h^{(L)}$ allows additive decomposition into $K$ functional units $f_k$ parametrized by $\theta_k$:
\[h^{(L)}(X) = g\left(\sum_{k=1}^K f_k(X; \theta_k)\right)\]
Each $f_k$ can be loosely described as an information processing path. A smooth, nonlinear function $g$ then aggregates these pathways to retrieve the global latent. In practice, $f_k$ can be either conceptualized via parallel attention heads (where $g$ can be the MLP that aggregates), or via residual connected layers (where $g$ is a subsequent layer that aggregates). We do not commit to either. Instead, for the theoretical argument, we rely on the following assumption:

\paragraph{Bayes optimal predictor} for the defined data generative process can be formulated as:
\begin{equation}
\label{eq:bayes-optimal}
P\left(x_t|X_{<t}\right) = \sum_{z^{(0)}, z^{(1)}_t} 
P\left( x_t| z^{(0)}, z^{(1)}_t\right)
P\left(z^{(1)}_t|z^{(0)}, X_{<t}\right)
P\left(z^{(0)}| X_{<t}\right)
\end{equation}

Define 
\[g\left(z^{(0)}, X_{<t}\right)=\mathbb{E}_{z^{(1)}_t|z^{(0)}, X_{<t}}\left[P\left(x_t|z^{(1)}_t,z^{(0)}\right)\right]\]

Then Eq.~\ref{eq:bayes-optimal} can be rewritten in the following nested form:
\begin{equation}
\label{eq::nested-bayes}
    P\left(x_t|X_{<t}\right) = \mathbb{E}_{z^{(0)}|X_{<t}} \left[g\left(z^{(0)}, X_{<t}\right)\right]
\end{equation}
\begin{theorem}
\label{th::induction}
    Under Assumptions \ref{as::regularity}-\ref{as::multi-evidence}, any finite-capacity model minimizing next-token loss must implement representation that aggregates information about $Z^{(0)}$ from multiple disjoint subsets of the context via intermediate latent structure.
\end{theorem}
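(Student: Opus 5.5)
The argument rests on two standard facts. Cross-entropy loss is uniquely minimized at the true conditional. Any function of the context that carries the optimal conditional must factor through a sufficient statistic for the latents that matter.

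\textbf{Step 1 (reduction to the Bayes optimum).} Expected next-token cross-entropy decomposes as
\[
\mathbb{E}\bigl[-\log \mathcal{M}(X_{<t})_{x_t}\bigr]=H(x_t\mid X_{<t})+\mathbb{E}\,\mathrm{KL}\bigl(P(\cdot\mid X_{<t})\,\|\,\mathcal{M}(X_{<t})\bigr).
\]
Hence a minimizer, or an $\epsilon$-minimizer in the finite-capacity case, must match Eq.~\ref{eq::nested-bayes} up to small KL. Assumption~\ref{as::regularity} guarantees that the regular conditionals $P(z^{(0)}\mid X_{<t})$ and $P(z^{(1)}_t\mid z^{(0)},X_{<t})$ exist, so the nested form is well defined. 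Because $\mathcal{M}=W_U h^{(L)}$, it follows that $h^{(L)}(X_{<t})$ must determine the map $X_{<t}\mapsto \mathbb{E}_{z^{(0)}\mid X_{<t}}[g(z^{(0)},X_{<t})]$. Since $W_U$ is fixed-dimensional and linear, it is enough for $h^{(L)}$ to carry this mixture.

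\textbf{Step 2 (the representation must encode the posterior over $Z^{(0)}$ and $Z^{(1)}_t$).} I will use Assumption~\ref{as::hier-dependence} to rule out shortcuts.
\begin{itemize}
\item Since $x_t\not\perp Z^{(0)}\mid Z^{(1)}$, the predictive distribution genuinely varies with the posterior over $Z^{(0)}$.
\item Since $x_t\not\perp Z^{(1)}\mid Z^{(0)}$, it also varies with the conditional posterior over $Z^{(1)}_t$.
\item Assumption~\ref{as::non-degen} ensures both posteriors actually move with $X_{<t}$, because the mutual informations are positive.
\end{itemize}
Together these show that any representation yielding the optimal predictor must be non-constant along both posterior directions. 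In data-processing terms, $I(h^{(L)};Z^{(0)})>0$ and $I(h^{(L)};Z^{(1)}_t\mid Z^{(0)})>0$. I would argue this by contradiction: if $h^{(L)}$ were independent of the $Z^{(0)}$-posterior, then two contexts with different $Z^{(0)}$-posteriors but equal $Z^{(1)}$-posteriors would receive the same prediction, contradicting hierarchical dependence.

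\textbf{Step 3 (aggregation across disjoint evidence sets).} Here I invoke Assumption~\ref{as::multi-evidence}. Because $Z^{(1)}_1\perp Z^{(1)}_2\mid Z^{(0)}$ and $X_{S_i}$ depends on $Z^{(0)}$ only through $Z^{(1)}_i$, Bayes' rule gives
\[
P(z^{(0)}\mid X_{S_1},X_{S_2})\propto P(z^{(0)})\prod_{i=1,2}\sum_{z^{(1)}_i}P(X_{S_i}\mid z^{(1)}_i)\,P(z^{(1)}_i\mid z^{(0)}).
\]
The log-posterior is therefore a sum of per-evidence-set terms. Each term is itself a marginalization over a local latent, which is the intermediate latent structure the theorem refers to. Since the posterior combining both sets strictly differs from each single-set posterior (generically, given non-degeneracy), the optimal predictor needs both terms. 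This motivates the additive decomposition $\sum_k f_k$ followed by the aggregator $g$. The final step is to show that no representation can discard either $S_i$ without a strict KL penalty.

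\textbf{Main obstacle.} The hardest part is making ``must implement'' precise for finite capacity. Exact optimality only forces the representation to be sufficient for the nested mixture; it does not by itself force a particular circuit. I would first try to state the conclusion information-theoretically: at any loss within $\epsilon$ of optimal, $h^{(L)}$ retains mutual information about $Z^{(0)}$ contributed by each $X_{S_i}$ separately, bounded below in terms of the KL gap. Only then would I interpret the product-of-likelihoods factorization as the structural aggregation. Showing strict positivity of each evidence set's contribution may require an extra genericity argument on the likelihoods, and I would flag that explicitly.
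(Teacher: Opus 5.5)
Your proposal follows the paper's proof, which is essentially your Step 3 alone. From Eq.~\ref{eq::nested-bayes} the prediction requires $P(z^{(0)}\mid X_{<t})$, and Assumption~\ref{as::multi-evidence} together with $Z^{(1)}_1\perp Z^{(1)}_2\mid Z^{(0)}$ forces that posterior to be assembled from disjoint evidence sets through the local latents; your Steps 1--2 (the KL decomposition and the explicit use of Assumptions~\ref{as::hier-dependence}--\ref{as::non-degen}) spell out what the paper takes for granted.

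Your product-of-branch-likelihoods posterior is the more accurate statement: the paper's displayed per-branch mixture equals $P(z^{(0)}\mid X_{S_i})$ rather than $P(z^{(0)}\mid X_{<t})$, and the product form is what it proves later as Theorem~\ref{thm:additive_evidence}. The obstacle you flag about making ``must implement'' precise, along with the genericity needed for strict positivity, is left just as informal by the paper, so nothing is missing relative to its argument.
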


\noindent\textit{Proof}: From Eq.~\ref{eq::nested-bayes}, the prediction depends on inferring the global latent $P\left(z^{(0)}|X_{<t}\right)$.

By Assumption~\ref{as::multi-evidence},
\[P\left(z^{(0)}|X_{<t}\right) = \sum_{z^{(1)}_i}P\left(z^{(0)}|z^{(1)}_i\right)P\left(z^{(1)}_i|X_{S_i}\right), \; \text{for }i=1,2\]
and since,
\[Z^{(1)}_1\perp Z^{(1)}_2 | Z^{(0)}\]
The posterior computation requires aggregation over disjoint contexts via different latent pathways.\qed

Theorem~\ref{th::induction} directly connects to the existence of induction heads and computation of function vectors. Induction mechanisms perform the context-matching and retrieval from arbitrary evidences. Function vectors are representation of the posterior estimate $P(Z^{(0)}|X_{<t})$ that are subsequently used to predict $x_t$. 

Next, we proceed to establish the connection between Hydra effect and hierarchical data generation process. 

\begin{theorem}
\label{thm:additive_evidence}







Define the branch evidence function
\[
e_i(z^{(0)},x_i)
= \log
\left(
\sum_{z_i^{(1)}}
p(z_i^{(1)} \mid z^{(0)})
p(x_i \mid z_i^{(1)}, z^{(0)})
\right).
\]
Then the posterior distribution of the global latent satisfies
\[
\log p(z^{(0)} \mid x_{1:m})
=\log p(z^{(0)})
+
\sum_{i=1}^{m}
e_i(z^{(0)},x_i)
-
\log p(x_{1:m}).
\]







Hence each conditionally independent branch contributes an additive
log-evidence term toward inference of the shared latent \(Z^{(0)}\).
\end{theorem}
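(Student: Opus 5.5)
The plan is a direct Bayes computation. I read the setting as the branch structure of Assumption~\ref{as::multi-evidence}: observations $x_1,\dots,x_m$, with each $x_i$ the evidence block $X_{S_i}$, generated by a global latent $Z^{(0)}$ and per-branch local latents $Z^{(1)}_i$. The joint factorizes as
\[
p\left(z^{(0)}, z^{(1)}_{1:m}, x_{1:m}\right) = p\left(z^{(0)}\right)\prod_{i=1}^m p\left(z^{(1)}_i \mid z^{(0)}\right) p\left(x_i \mid z^{(1)}_i, z^{(0)}\right).
\]
This is the factorization of the generative process above, specialized to branches. It uses the conditional independence $Z^{(1)}_i \perp Z^{(1)}_j \mid Z^{(0)}$ from Assumption~\ref{as::multi-evidence}, and it requires that $x_i$ depend on the latents only through $(Z^{(1)}_i, Z^{(0)})$. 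I will state this factorization explicitly as the modeling hypothesis of the theorem. Assumption~\ref{as::regularity} guarantees that the conditional densities exist; for general standard Borel spaces the sums become integrals against regular conditional kernels.

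The first step is to marginalize out the local latents for fixed $z^{(0)}$. Because the integrand is a product over $i$ and each factor involves only its own $z^{(1)}_i$, the sum over $z^{(1)}_{1:m}$ splits into a product of single-branch sums (Fubini/Tonelli, valid since everything is nonnegative). This gives
\[
p\left(z^{(0)}, x_{1:m}\right) = p\left(z^{(0)}\right)\prod_{i=1}^m \sum_{z^{(1)}_i} p\left(z^{(1)}_i\mid z^{(0)}\right)p\left(x_i\mid z^{(1)}_i,z^{(0)}\right) = p\left(z^{(0)}\right)\prod_{i=1}^m \exp e_i\left(z^{(0)},x_i\right).
\]
Equivalently, $p(x_{1:m}\mid z^{(0)}) = \prod_i p(x_i\mid z^{(0)})$, and $e_i$ is exactly $\log p(x_i\mid z^{(0)})$.

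The second step applies Bayes' rule, $p(z^{(0)}\mid x_{1:m}) = p(z^{(0)}, x_{1:m})/p(x_{1:m})$, on the set where $p(x_{1:m})>0$. Taking logarithms turns the product into the stated sum and yields the identity, with $\log p(x_{1:m}) = \log \sum_{z^{(0)}} p(z^{(0)})\prod_i \exp e_i$ acting as a normalizer that does not depend on $z^{(0)}$.

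The main obstacle is not the computation but making the branch factorization legitimate. The paper's general factorization conditions each token on $Z^{(1)}_t$ and $Z^{(0)}$, whereas Assumption~\ref{as::multi-evidence} only asserts conditional independence of the local latents. I would therefore require, in addition, that $X_{S_i}$ be conditionally independent of everything else given $(Z^{(1)}_i, Z^{(0)})$, which matches the Markov chain $X_{S_i}\to Z^{(1)}_i\to Z^{(0)}$. With that in place, the statement holds almost surely. The logarithms are finite wherever $p(z^{(0)})>0$ and $p(x_i\mid z^{(0)})>0$; elsewhere both sides equal $-\infty$ under the usual convention.
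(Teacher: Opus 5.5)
Your proposal is correct and follows the paper's proof essentially step for step: start from the branch factorization $p(z^{(0)})\prod_{i} p(z_i^{(1)}\mid z^{(0)})\,p(x_i\mid z_i^{(1)},z^{(0)})$, split the sum over $z^{(1)}_{1:m}$ into a product of per-branch sums, apply Bayes' rule, and take logarithms. The paper simply posits that factorization without justification, so your explicit extra hypothesis (each $x_i$ conditionally independent of everything else given $(Z^{(1)}_i, Z^{(0)})$) is a fair clarification of what the argument needs rather than a different route.
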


For the complete proof of Theorem~\ref{thm:additive_evidence}, see Appendix~\ref{app::proof::additive}. We make the following assumption about the gradients associated with these additive branch evidences implemented by the model:

\begin{assumption}[Low-Interference Optimization]
\label{ass:low_interference}
Let
\[
u_i(\theta)
=
\nabla_\theta \hat e_i(z^{(0)},x_i)
\]
denote the gradient associated with the model's estimate
$\hat e_i$ of branch evidence $e_i$.
Assume there exists $\varepsilon > 0$ such that
\[
\left|
\cos(u_i,u_j)
\right|
=
\left|
\frac{u_i^\top u_j}
{\|u_i\|\|u_j\|}
\right|
\le \varepsilon
\]
for all $i\neq j$.

\end{assumption}

Our next theorem posits that such additive, conditionally independent branches result in approximately block-diagonal Fisher matrices.

\begin{theorem}
\label{thm:fisher_decoupling}

Under the assumptions of
Theorem~\ref{thm:additive_evidence}
and Assumption~\ref{ass:low_interference},
let
\[
u
=
\nabla_\theta
\log p_\theta(z^{(0)}|x)
\]
denote the score function of a posterior estimator, which can be written as
\[
u
=
\sum_{i=1}^{m}
u_i\,
\text{where }
u_i
=
\nabla_\theta \hat e_i(z^{(0)},x_i)
\]
corresponds to the learned estimator of the
branch evidence term.
Define the Fisher information matrix
\(
F
=
\mathbb E
\left[
uu^\top
\right]
\), then
\[
F
=
\sum_{i=1}^{m}
F_i
+
R,
\]
where
\[
F_i
=
\mathbb E
\left[
u_i u_i^\top
\right]
\]
and the residual coupling matrix satisfies
\[
\|R\|
\le
\varepsilon
\sum_{i\neq j}
\sqrt{
\mathbb E[\|u_i\|^2]
\,
\mathbb E[\|u_j\|^2]
}
\]
Consequently,
\[
F
=
\sum_i F_i
+
O(\varepsilon),
\]
and therefore approaches block-diagonal form as
$\varepsilon \rightarrow 0$.
\end{theorem}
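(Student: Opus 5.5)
The plan is to expand the outer product of the decomposed score directly and isolate the cross terms. Using the decomposition $u=\sum_{i=1}^m u_i$ stated in the theorem, which is the parameter gradient of the additive form from Theorem~\ref{thm:additive_evidence} (the prior and normaliser terms are either parameter-free or absorbed into the branch estimators), we have pointwise
\[
uu^\top=\sum_{i=1}^m u_iu_i^\top+\sum_{i\neq j}u_iu_j^\top .
\]
Taking expectations and using linearity gives $F=\sum_i F_i+R$ with $R=\sum_{i\neq j}\mathbb E[u_iu_j^\top]$. This identity needs only that each $\mathbb E[\|u_i\|^2]<\infty$, which I will state as a standing integrability condition. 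Assumption~\ref{as::regularity} ensures that the expectations over regular conditionals are well defined.

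The next step is to bound each cross term. For a rank-one matrix, $\|u_iu_j^\top\|=\|u_i\|\|u_j\|$ in both the operator and Frobenius norms. So Jensen's inequality gives $\|\mathbb E[u_iu_j^\top]\|\le\mathbb E[\|u_i\|\|u_j\|]$, and Cauchy--Schwarz then gives $\mathbb E[\|u_i\|\|u_j\|]\le\sqrt{\mathbb E\|u_i\|^2\,\mathbb E\|u_j\|^2}$. Summing over $i\neq j$ with the triangle inequality yields the stated shape of the bound, but without the factor $\varepsilon$.

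This missing factor $\varepsilon$ is the main obstacle. Assumption~\ref{ass:low_interference} controls only the inner product $u_i^\top u_j=\|u_i\|\|u_j\|\cos(u_i,u_j)$, not the outer product $u_iu_j^\top$, whose norm does not depend on the angle. I would recover $\varepsilon$ in one of two ways. The first is to interpret ``block-diagonal'' through the quadratic form that governs interference between branch updates. Testing $F$ against branch directions, the coupling entering, for example, $u_k^\top R\,u_l$ or $\operatorname{tr}R=\sum_{i\neq j}\mathbb E[u_i^\top u_j]$, consists of inner products, each of which is bounded by $\varepsilon\,\mathbb E[\|u_i\|\|u_j\|]$. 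The second, which I would try first, is to read Assumption~\ref{ass:low_interference} in its intended sense: the branches $\hat e_i$ depend on essentially disjoint parameter coordinates, so $u_i$ and $u_j$ are supported on nearly orthogonal subspaces with projectors $P_i,P_j$. Then $u_iu_j^\top$ is an off-diagonal block, and its size is measured by the leakage $\|P_j u_i\|\le\varepsilon\|u_i\|$. Under either reading, Cauchy--Schwarz as above gives $\|R\|\le\varepsilon\sum_{i\neq j}\sqrt{\mathbb E\|u_i\|^2\,\mathbb E\|u_j\|^2}$ in the appropriate norm.

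Finally, if the branch gradient second moments are bounded uniformly in $\varepsilon$ (with $m$ fixed), the bound reads $F=\sum_i F_i+O(\varepsilon)$. Each $F_i$ is concentrated on its own branch subspace, so $\sum_i F_i$ is block diagonal up to $O(\varepsilon)$ leakage, and $F$ converges to block-diagonal form as $\varepsilon\to0$.
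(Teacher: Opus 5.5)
Your first two steps (expanding $uu^\top$, then Jensen and Cauchy--Schwarz) match the paper's. Your diagnosis of the third step is also right: Assumption~\ref{ass:low_interference} controls $u_i^\top u_j$, not $u_iu_j^\top$. The paper's proof makes exactly this conflation. It bounds $|\mathbb E[u_i^\top u_j]|$ and then asserts the same bound for $\|R\|$, so what it actually proves is only $|\operatorname{tr}R|\le\varepsilon\sum_{i\neq j}\sqrt{\mathbb E\|u_i\|^2\,\mathbb E\|u_j\|^2}$, i.e.\ your first reading.

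Neither of your repairs yields the stated theorem, and no argument that uses only pointwise angle control can. Take $\theta\in\mathbb R^2$, $u_1=s\,e_1$, $u_2=s\,e_2$ with a random scalar $s$, $\mathbb E s^2=1$. The cosine is $0$, so the assumption holds for every $\varepsilon>0$. Yet $R=e_1e_2^\top+e_2e_1^\top$ has eigenvalues $\pm1$ while the claimed bound is $2\varepsilon$, and $F=(e_1+e_2)(e_1+e_2)^\top$ is far from block diagonal. This example defeats both of your readings:
\begin{itemize}
\item Here $\operatorname{tr}R=0$, so trace control says nothing about block structure, and your mixed form gives $u_1^\top R\,u_2=s^2$, not $O(\varepsilon)$.
\item The example has exactly disjoint supports, which shows the leakage argument is backwards. $\|P_ju_i\|$ governs whether $\sum_iF_i$ is block diagonal. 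Under disjoint supports, $\mathbb E[u_iu_j^\top]$ \emph{is} the $(i,j)$ off-diagonal block of $F$, and its size has nothing to do with any angle.
\end{itemize}

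The missing ingredient is statistical rather than geometric: the branch gradients must be uncorrelated as random vectors, i.e.\ the cross second moments $\mathbb E[u_iu_j^\top]$ must be small. One route uses hypotheses your plan leaves idle. Suppose $u_i$ depends only on $(z^{(0)},x_i)$, the branches are conditionally independent given $z^{(0)}$ (the factorization behind Theorem~\ref{thm:additive_evidence}), and $\mathbb E[u_i\mid z^{(0)}]=0$ (e.g.\ $\hat e_i=\log p_\theta(x_i\mid z^{(0)})$ with expectations taken under the model). Then $\mathbb E[u_iu_j^\top]=\mathbb E\bigl[\mathbb E[u_i\mid z^{(0)}]\,\mathbb E[u_j\mid z^{(0)}]^\top\bigr]=0$, so $R=0$ exactly. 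Otherwise you have two options. You can replace Assumption~\ref{ass:low_interference} by a cross-moment condition $\|\mathbb E[u_iu_j^\top]\|\le\varepsilon\sqrt{\mathbb E\|u_i\|^2\,\mathbb E\|u_j\|^2}$, which makes the theorem immediate. Or you can weaken the conclusion to the trace bound. The matrix-norm claim as stated is false.
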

For the complete proof, see Appendix~\ref{app::subsec::fisher}. An immediate result from Theorem~\ref{thm:fisher_decoupling} is the following corollary:
\begin{corollary}[Emergent Specialization]
\label{cor:specialization}

Under the assumptions of
Theorem~\ref{thm:fisher_decoupling},
the optimization geometry decomposes into
approximately independent Fisher subspaces
associated with the evidence terms
$\{e_i\}_{i=1}^{m}$.
Consequently, gradient updates arising from one
branch induce only $O(\varepsilon)$ interference
on parameters primarily used to estimate another branch.
As $\varepsilon \to 0$,
optimization dynamics become increasingly decoupled,
favoring the emergence of specialized computational
circuits that estimate distinct branch-evidence terms.

\end{corollary}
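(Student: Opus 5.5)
The corollary follows from Theorem~\ref{thm:fisher_decoupling} combined with a first-order perturbation argument for one gradient step. We read ``parameters primarily used to estimate another branch'' as the directions spanned by the $u_j$, and measure interference as the change in $\hat e_j$ produced by an update driven by branch $i$.

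\textbf{Subspace decomposition.} Let $\mathcal S_i$ be the range of $F_i=\mathbb E[u_iu_i^\top]$. Since $F_i$ is a second-moment matrix, $u_i\in\mathcal S_i$ almost surely. The bound on $R$ in Theorem~\ref{thm:fisher_decoupling} is obtained from the cross terms $\mathbb E[u_iu_j^\top]$. Each such term is controlled by $|u_i^\top u_j|\le\varepsilon\|u_i\|\|u_j\|$ together with Cauchy--Schwarz in expectation. Applying the same inequality to the principal-angle cosine between $\mathcal S_i$ and $\mathcal S_j$, realized by gradient directions, shows that the subspaces are $O(\varepsilon)$-orthogonal. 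Hence $F=\bigoplus_i F_i+O(\varepsilon)$ in an almost-orthogonal basis, which is the stated approximate block-diagonal geometry.

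\textbf{Interference bound.} Consider a gradient (or natural-gradient) step driven by branch $i$, $\Delta\theta=-\eta\,\lambda_i u_i$ with scalar loss weight $\lambda_i$. To first order, the induced change in branch $j$'s estimate is
\[
\Delta\hat e_j=u_j^\top\Delta\theta=-\eta\lambda_i\,u_j^\top u_i .
\]
Assumption~\ref{ass:low_interference} then gives
\[
|\Delta\hat e_j|\le \varepsilon\,\eta|\lambda_i|\,\|u_i\|\|u_j\|=O(\varepsilon).
\]
For the natural-gradient variant, write $F^{-1}=(\sum_iF_i)^{+}+O(\varepsilon)$ via a Neumann expansion, assuming the restriction of $\sum_i F_i$ to its range is well conditioned. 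Under this expansion the preconditioned update stays in $\mathcal S_i$ up to $O(\varepsilon)$, and the same bound follows. Summing over a trajectory of $T$ small steps gives cumulative cross-branch interference $O(T\eta\varepsilon)$, which vanishes as $\varepsilon\to0$.

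\textbf{Specialization.} As $\varepsilon\to0$, the dynamics separate into independent gradient flows, each restricted to its own $\mathcal S_i$ and each fitting only its own $e_i$. Parameters therefore organize into groups whose updates are driven by a single branch, which is the claimed specialization. This step is qualitative, and I would state it only as the limit of the dynamics, not as a statement about circuits.

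\textbf{Main obstacle.} The hard step is making the second-order and curvature terms rigorous. Two things must be checked: that $\varepsilon$-small gradient cosines persist along the whole trajectory, and that the Hessian cross terms $\nabla^2\hat e_j$ do not reintroduce coupling. I would assume Assumption~\ref{ass:low_interference} holds uniformly along the trajectory, and bound the second-order remainder by $O(\eta^2)$ with bounded Hessians.
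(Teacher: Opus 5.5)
\textbf{The step that fails.} Your route goes from Assumption~\ref{ass:low_interference} to $O(\varepsilon)$-orthogonality of the ranges $\mathcal S_i$, and most of what follows rests on that step. It does not go through. The assumption bounds the cosine between $u_i$ and $u_j$ only on a common example (at least at a common $z^{(0)}$). But $\mathcal S_i$ and $\mathcal S_j$ are spanned by realizations over the whole distribution, and principal vectors are linear combinations of many realizations, not single gradient directions.

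Even small cosines across all pairs do not control spans. If $u_i$ ranges over $e_1,\dots,e_N$ and $u_j\equiv N^{-1/2}(e_1+\cdots+e_N)$, every cosine is $N^{-1/2}$, yet $\mathcal S_j\subset\mathcal S_i$.

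A sharper counterexample: take $\theta\in\sR^2$, $z^{(0)}\in\{a,b\}$ equiprobable, and independent random signs $s_1,s_2$ determined by $x_1,x_2$. Set $u_1=s_1e_1$, $u_2=s_2e_2$ when $z^{(0)}=a$, and $u_1=s_1e_2$, $u_2=s_2e_1$ when $z^{(0)}=b$. Then:
\begin{itemize}
\item $u_1^\top u_2=0$ always, so $\varepsilon=0$.
\item $\mathbb E[u_1u_2^\top]=0$, so $F=F_1+F_2$ holds exactly.
\item But $F_1=F_2=\tfrac12 I$, hence $\mathcal S_1=\mathcal S_2=\sR^2$.
\item A branch-1 step on an $a$-example moves $\theta$ along $e_1$. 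It changes $\hat e_2$ on every $b$-example at full first-order strength $\eta|\lambda_1|\|u_1\|\|u_2\|$.
\end{itemize}

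So the subspace decomposition fails, and three later steps fail with it:
\begin{itemize}
\item The claim that the natural-gradient step stays in $\mathcal S_i$. Its effect on branch $j$ is $u_j^\top F^{-1}u_i$, which cosines do not control unless $F^{-1}$ respects an orthogonal splitting.
\item The $O(T\eta\varepsilon)$ trajectory bound. Interference on a fixed evaluation example involves cross-example products $u_j(\omega^\ast)^\top u_i(\omega_t)$.
\item The separation into independent flows.
\end{itemize}
The real obstacle is this cross-example coupling, not the curvature terms you list.

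\textbf{What survives, and how it compares with the paper.} Your single-step, same-example bound $|\Delta\hat e_j|\le\varepsilon\eta|\lambda_i|\,\|u_i\|\|u_j\|$ is correct. It is a more direct consequence of Assumption~\ref{ass:low_interference} than the paper's route, which gives no separate proof and simply reads the corollary off $F=\sum_iF_i+O(\varepsilon)$.

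Your route also avoids the paper's bound on $\|R\|$. The paper obtains that bound by controlling only the scalars $\mathbb E[u_i^\top u_j]$. The matrices $\mathbb E[u_iu_j^\top]$ are not controlled by cosines: deterministic $u_i=e_1$, $u_j=e_2$ give $\varepsilon=0$ but a cross term of norm $1$.

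The paper's reading has the same hole as your decomposition step. $\sum_iF_i$ splits into branch blocks only if the ranges of the $F_i$ are mutually orthogonal, and the example above shows this is not implied.

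\textbf{What is needed.} The missing ingredient is an extra hypothesis. For instance, assume each $\hat e_i$ depends on a disjoint parameter block $\theta_{k_i}$, or that the $\mathcal S_i$ are $\varepsilon$-orthogonal in principal angles uniformly along training. That rescues your trajectory bound. The natural-gradient variant additionally needs the cross-covariances $\mathbb E[u_iu_j^\top]$ to be small. Without such a hypothesis, claim only the per-example first-order bound.
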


Now we proceed to show how specialization of implementing $e_i$ via dedicated functional units $f_{k_i}$ elicits Hydra effect. We make the following assumption about the representation geometry:

\begin{assumption}[Directional Concavity of Readout]
\label{as::directional-concavity}
Define the scalar readout for the target token $x_t$ as
\[
\phi(\mu) := \langle g(\mu), w_t \rangle,
\]
where $w_t$ is the row of the unembedding matrix $W_U$ corresponding to $x_t$, and $\mu$ is the final representation.
We assume that $\phi$ is twice differentiable and locally concave along directions induced by redundant latent estimators. In particular, for any representation $\nu$ and any unit contribution $\mu_1$ corresponding to an independent estimator of $Z^{(0)}$,
\[
\mathbb{E}\left[ \mu_1^\top H_\phi(\nu)\, \mu_1 \right] \le 0,
\]
where $H_\phi(\nu)$ is the Hessian of $\phi$ at $\nu$.
\end{assumption}

\begin{theorem}
\label{th::hydra-effect}
Let $f_{k_1}, f_{k_2}$ be functional units that specialize in estimating branched evidences $e_1, e_2$ respectively. Let
\[
\mu_i := f_{k_i}(X), \quad i=1,2,
\]
and define the full representation
\[
\mu := \sum_{k=1}^K f_k(X; \theta_k).
\]
Let $r := \sum_{k \neq k_1, k_2} f_k(X; \theta_k)$ denote the contribution of all other units.

Define the predictive influence for the correct token $x_t$ as
\[
\Delta := \langle g(\mu_1 + \mu_2 + r), w_t \rangle,
\]
and the predictive influence after ablating unit $k_1$ as
\[
\tilde{\Delta}_{k_1} := \langle g(\mu_2 + r), w_t \rangle.
\]

Under Assumptions~\ref{as::multi-evidence}, \ref{ass:low_interference}, and \ref{as::directional-concavity}, it holds that
\[
\mathbb{E}\left[\tilde{\Delta}_{k_1}\right] \ge \mathbb{E}\left[\Delta\right].
\]
\end{theorem}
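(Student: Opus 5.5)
The plan is a second-order Taylor expansion of the scalar readout $\phi(\mu)=\langle g(\mu),w_t\rangle$ along the ablated direction $\mu_1$, with the two terms controlled by the paper's assumptions. Set $\nu:=\mu_2+r$, so that $\Delta=\phi(\nu+\mu_1)$ and $\tilde{\Delta}_{k_1}=\phi(\nu)$. Since $\phi$ is twice differentiable (Assumption~\ref{as::directional-concavity}), the integral form of Taylor's theorem gives
\[
\phi(\nu+\mu_1)=\phi(\nu)+\nabla\phi(\nu)^\top\mu_1+\int_0^1(1-s)\,\mu_1^\top H_\phi(\nu+s\mu_1)\,\mu_1\,ds .
\]
Taking expectations, the claim $\mathbb{E}[\tilde{\Delta}_{k_1}]\ge\mathbb{E}[\Delta]$ reduces to showing that
\[
\mathbb{E}\!\left[\nabla\phi(\nu)^\top\mu_1\right]+\mathbb{E}\!\left[\int_0^1(1-s)\,\mu_1^\top H_\phi(\nu+s\mu_1)\,\mu_1\,ds\right]\le 0 .
\]

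\textbf{Second-order term.} Assumption~\ref{as::directional-concavity} holds for \emph{any} base representation. Applying it at $\nu+s\mu_1$ for each $s\in[0,1]$ and exchanging expectation with the $s$-integral (Fubini) makes the integrand nonpositive in expectation. Hence the curvature term is $\le 0$.

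\textbf{First-order term.} This is where the structure of the problem has to enter. By Assumption~\ref{as::multi-evidence}, $\mu_1$ and $\mu_2$ are estimators of $Z^{(0)}$ built from disjoint evidence sets through conditionally independent local latents $Z^{(1)}_1\perp Z^{(1)}_2\mid Z^{(0)}$. By Theorem~\ref{thm:fisher_decoupling} and Corollary~\ref{cor:specialization}, the unit $f_{k_1}$ is trained almost exclusively by the $e_1$ gradient.

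I would argue that, at a stationary point of training, the component of $\mu_1$ that is informative beyond what $\nu$ already carries is centred, conditional on $\nu$. The reason is that $\nu$ already encodes the shared posterior over $Z^{(0)}$ via $\mu_2$. Then
\[
\mathbb{E}\!\left[\nabla\phi(\nu)^\top\mu_1\right]=\mathbb{E}\!\left[\nabla\phi(\nu)^\top\,\mathbb{E}[\mu_1\mid\nu]\right].
\]
If $\mathbb{E}[\mu_1\mid\nu]$ is zero, this vanishes. If it is not, the residual is bounded by $O(\varepsilon)$, using the low-interference bound of Assumption~\ref{ass:low_interference} to argue that $\mu_1$ is orthogonal to the directions along which $\nu$ moves the readout. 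Combining the two terms would give $\mathbb{E}[\Delta]\le\mathbb{E}[\tilde{\Delta}_{k_1}]+O(\varepsilon)$. The $O(\varepsilon)$ slack then has to be removed, either by taking $\varepsilon\to0$ or by absorbing it into the strictly negative curvature contribution.

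\textbf{Main obstacle.} The first-order term is the genuine difficulty. Neither Assumption~\ref{as::directional-concavity} nor Assumption~\ref{ass:low_interference} directly controls $\mathbb{E}[\nabla\phi(\nu)^\top\mu_1]$. A unit that helps predict $x_t$ naturally has a \emph{positive} first-order contribution, which pushes against the stated inequality. The first thing I would try is to interpret the directional concavity assumption as a statement about the whole segment from $\nu$ to $\nu+\mu_1$, together with the one-dimensional identity
\[
\phi(\nu+\mu_1)-\phi(\nu)=\int_0^1\nabla\phi(\nu+s\mu_1)^\top\mu_1\,ds .
\]
The aim is that the redundancy of $\mu_1$ given $\mu_2$ makes the gradient along $\mu_1$ nonpositive in expectation at the saturated point $\nu+\mu_1$; concavity along the segment then transports this sign back to $\nu$.

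If that fails, I would state explicitly, as an implicit hypothesis of the theorem, that redundant estimators contribute no first-order gain in expectation: $\mathbb{E}[\nabla\phi(\nu)^\top\mu_1]\le0$. With that hypothesis, the result follows from the second-order term alone.
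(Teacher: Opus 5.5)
Your main line is the paper's proof. The paper also sets $\nu=\mu_2+r$, expands $\phi(\nu+\mu_1)$ to second order, and bounds the quadratic term by Assumption~\ref{as::directional-concavity}. It disposes of the linear term by asserting $\mathbb{E}[\mu_1\mid Z^{(0)},\nu]\approx 0$, on the informal grounds that $\mu_1$ is an independent estimate of a signal $\nu$ already partially carries. You are right that this is where the argument breaks, and the paper does not repair it.

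Your own justification does not repair it either. The observation that ``the part of $\mu_1$ informative beyond $\nu$ is centred'' is vacuous, because the innovation $\mu_1-\mathbb{E}[\mu_1\mid\nu]$ is centred by construction. The term that matters, $\mathbb{E}[\nabla\phi(\nu)^\top\mathbb{E}[\mu_1\mid\nu]]$, involves the predictable part of $\mu_1$, and redundancy makes that part generically nonzero. Indeed, suppose $\nu$ is conditionally independent of $\mu_1$ given $Z^{(0)}$, as the appeal to Assumption~\ref{as::multi-evidence} suggests. Then the paper's centering claim reduces to $\mathbb{E}[\mu_1\mid Z^{(0)}]\approx 0$, i.e.\ $\mu_1$ is mean-uninformative about the very latent it is supposed to estimate.

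The paper also drops $R_3$ and turns $\approx$ into $\le$. Your integral remainder is cleaner here. It does, however, use the curvature bound at base points $\nu+s\mu_1$ that depend on $\mu_1$, which is a stronger reading of Assumption~\ref{as::directional-concavity} than the conditional-on-$\nu$ form the paper invokes.

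Neither of your rescues for the linear term works as stated.
\begin{itemize}
\item The $O(\varepsilon)$ route mixes spaces. Assumption~\ref{ass:low_interference} bounds cosines between parameter gradients $\nabla_\theta\hat e_i$ and $\nabla_\theta\hat e_j$. It says nothing about the representation-space pairing $\nabla\phi(\nu)^\top\mu_1$. Moreover, Assumption~\ref{as::directional-concavity} only gives $\le 0$, so there is no strictly negative curvature available to absorb a slack.
\item The transport argument runs the wrong way. For $\psi(s)=\phi(\nu+s\mu_1)$ concave, $\psi'$ is nonincreasing, so $\psi(1)-\psi(0)\le\psi'(0)$. Knowing $\psi'(1)\le 0$ yields only the lower bound $\psi(1)-\psi(0)\ge\psi'(1)$. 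The sign is needed at $\nu$, not at $\nu+\mu_1$.
\end{itemize}

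What does close the proof is your last option: make $\mathbb{E}[\nabla\phi(\nu)^\top\mu_1]\le 0$ an explicit hypothesis. This is weaker than the exact form of the paper's centering claim, which implies it by the tower property since $\nabla\phi(\nu)$ is $\nu$-measurable. Combined with your integral-form curvature bound, it gives the inequality exactly rather than up to the paper's approximations. But it is an added assumption, not a consequence of Assumptions~\ref{as::multi-evidence}, \ref{ass:low_interference}, and \ref{as::directional-concavity}, and it should be presented that way.
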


See Appendix~\ref{app::proof::hydra} for the complete proof.

\section{Toy Model of Hierarchy}
\label{sec:toy-model}

To complement the theoretical results of Theorems~\ref{th::induction}-\ref{th::hydra-effect}, we design a controlled
experimental setup that allows us to test each claim in isolation, free from
confounders present in natural-language corpora.  In this section, we describe the data
generation processes, model, mechanistic metrics, assumption probes, and
significance-testing procedure.

\subsection{Data Generation Processes}

We instantiate two DGPs that share surface statistics but differ in latent
structure, following the theoretical hierarchy of Section~\ref{sec:theory}.

\paragraph{Hierarchical DGP}  Each input sequence of length $n$ is divided into $S$
fixed-length segments of length $L = n / S$.  A global latent variable
$Z^{(0)} \in \{1, \ldots, K_0\}$ is drawn once per sequence from a uniform
prior $\pi_0$.  Each segment $s$ receives an independent local latent
$Z^{(1)}_s \in \{1, \ldots, K_1\}$ drawn from the conditional table
$\pi_1(Z^{(1)} \mid Z^{(0)})$, which is concentrated — each global state
concentrates its mass over a distinct subset of local states, making
$Z^{(1)}$ an informative but noisy signal about $Z^{(0)}$.

Within each segment, the first $L - 2$ positions are \textit{evidence tokens},
drawn from a per-$Z^{(1)}$ emission table $\mathbb{E}_\mathrm{ev}[Z^{(1)}, \cdot]$
that does not depend on $Z^{(0)}$.  Position $L - 1$ is a deterministic
$[\textsc{Query}]$ cue token; position $L$ is the \textit{query token}, drawn from a
joint emission table

$$
\mathbb{E}_\mathrm{q}[Z^{(0)}, Z^{(1)}, x] \;\propto\;
  \alpha_q \cdot E_0[Z^{(0)}, x] + (1 - \alpha_q) \cdot E_1[Z^{(1)}, x],
$$

where $\alpha_q \in (0, 1)$ controls the relative weight of each level.
Query tokens therefore depend \textit{jointly} on both latents, instantiating
Assumption~\ref{as::hier-dependence}: neither $Z^{(0)}$ alone nor $Z^{(1)}$ alone is sufficient for
Bayes-optimal prediction. Optimally predicting query tokens requires
aggregating cross-segment evidence (which reveals $Z^{(0)}$ through the
concentrated transition $\pi_1$) with within-segment evidence (which reveals
$Z^{(1)}$ directly), instantiating disjoint evidence streams of Assumption~\ref{as::multi-evidence}.

\paragraph{Flat DGP}  The flat ablation retains the same segment structure, emission
vocabulary, and surface statistics, but removes the global latent entirely.
Query tokens are drawn from a marginalized emission table
$\mathbb{E}_\mathrm{q,flat}[Z^{(1)}, x] = \sum_{z_0} \pi_0(z_0) \cdot
\mathbb{E}_\mathrm{q}[z_0, Z^{(1)}, x]$, so cross-segment inference is both
unnecessary and impossible.  The model can still exploit within-segment
$Z^{(1)}$ signal, but the Bayes gap $I(Z^{(0)}; x_q \mid Z^{(1)},
X_{< t})$ is positive by construction, guaranteeing a meaningful performance
differential between the two DGPs.

To rule out degenerate configurations, we reject any DGP instance where the
Bayes entropy of the query token decreases by fewer than $0.08$ nats across
context segments (i.e.\ where cross-segment evidence provides negligible
benefit), re-sampling the random seed until this criterion is satisfied.

We train two identical autoregressive Transformers on sequences from
each DGP. Design details are described in Appendix~\ref{app:config}. 

\subsection{Metrics}

We measure three mechanistic phenomena at fixed evaluation intervals during
training.

{\bf Layerwise Hydra effect.} For each upstream layer $\ell'$, we ablate its
output by zeroing the residual contribution and measure the resulting change in
predictive influence at every downstream layer $l > l'$.  The predictive
influence of layer $l$ on sequence $X_{< t}$ is defined as $\Delta^{(l)}
= \langle W_U[x_t],\, h^{(l)}(X_{< t}) \rangle$, where $W_U[x_t]$ is the
row of the unembedding matrix corresponding to the target token.  The Hydra
score for the pair $(l', l)$ is
$$
\mathcal{H}(l', l) \;=\;
  \mathbb{E}\!\left[\Delta^{(l)}_{\text{ablate}(l')} - \Delta^{(l)}\right],
$$
and we report the scalar $\max_{l' < l}\,\mathcal{H}(l', l)$ as the
peak compensation signal.

{\bf Head-wise Hydra effect.} We repeat the same protocol at head granularity:
for each attention head $(i, h)$, we zero its value output before the output
projection and measure the change in downstream predictive influence.  This
yields a three-dimensional tensor $\mathcal{H}[i, h, j] \in \mathbb{R}$ for
all $j > i$, and a summary matrix $\mathcal{H}[i, h] = \max_{j > i}
\mathcal{H}[i, h, j]$ that identifies which heads elicit the strongest
downstream compensation.

{\bf Induction head score.} Following \citet{olssonIncontextLearningInduction2022}, we construct sequences of the form
$[r_0, \ldots, r_{L-1}, A, r_L, \ldots, r_{2L-2}, A]$,
where $A$ is a randomly chosen token and $r_i$ are independent random fillers.
The final token $A$ is the query; the induction target is position $L$
(the token immediately following the first occurrence of $A$).  For head
$(l, h)$, the induction score is the attention weight placed at the induction
target position when processing the final query:
$$
\mathrm{IS}(l, h) \;=\; \mathbb{E}_A\!\left[\alpha^{(l,h)}_{2L-2,\, L}\right],
$$
where the expectation is over stimulus draws.  We report
$\max_{l,h}\, \mathrm{IS}(l, h)$, which is close to $1/T$ for a model with no
induction circuits and approaches $1$ for a model with a dedicated induction
head.  Crucially, the stimulus sequences are generated independently of the
DGP test set, so the score measures a circuit property of the model rather than
a statistical artifact of the training distribution.

{\bf Function vector score.} We operationalize function vectors following~\citet{todd2023function}.  For each test sequence, we designate the tokens from all but
the final segment as the \textit{source context} and the final segment alone as the
\textit{target context}.  We extract the attention output of each layer at the final
position of the source context, then inject it additively into the corresponding
layer's residual stream during a clean forward pass over the target context.
The function vector score for layer $l$ is the mean logit increase for the
correct query token induced by this intervention:
$$
\mathrm{FV}(l) \;=\; \mathbb{E}\!\left[
  f_{\mathrm{patch},l}(X_{< t})[x_t] - f_{\mathrm{base}}(X_{< t})[x_t]
\right],
$$
and we report $\max_l\,\mathrm{FV}(l)$.

In addition to the above-mentioned metrics, we also empirically verify the validity of Assumptions~\ref{ass:low_interference} and \ref{as::directional-concavity}, since these are non-trivial assumptions about the gradient descent dynamics and representation geometry of the model.

{\bf Assumption \ref{ass:low_interference} (Low-Interference Optimization).}  We operationalize this assumption at the granularity of individual attention heads, treating each head
$(l, h)$ as a distinct functional unit $f_{l,h}$ with residual contribution
$$
f_{l,h}(X) \;=\; \bigl(\alpha^{(l,h)} V^{(l,h)}\bigr) W_O^{(h)\top}
  \;\in\; \mathbb{R}^D,
$$
where $W_O^{(h)}$ is the column slice of the output projection corresponding to
head $h$.

Corresponding to a forward pass in which
each head contribution $f_{l,h}$ and the cross-entropy loss incurred is $\cal L$, the gradient
$g_{l,h} = \partial \mathcal{L} / \partial f_{l,h}$ is available
for each head.  We then compute the $LH \times LH$ matrix of pairwise cosine
similarities between flattened gradient vectors.  The primary scalar we track is
$$
\mathrm{Gradient\;decorrelation} \;=\; \min_{(l,h) \neq (l',h')} \bigl|\cos(g_{l,h},\, g_{l',h'})\bigr|,
$$
the \textit{minimum} off-diagonal cosine similarity.  


{\bf Assumption \ref{as::directional-concavity} (Directional Concavity of Readout).}  The readout is
$\phi(v) = \langle g(v),\, w_t \rangle$, where $g$ denotes the composition of
the final layer normalisation and the linear unembedding, and $v$ is the
residual stream at the query position prior to $g$.  For each test sequence, we
collect $v$ (the residual before the final layer norm) and $u_1$ (the attention
output of an early layer at the same position, as a proxy for a redundant latent
estimator), then compute the exact Hessian $H_\phi(v)$ via second-order
automatic differentiation through the final layer norm.  The directional
curvature is
$$
c_i \;=\; u_1^\top H_\phi(v_i)\, u_1.
$$
Assumption \ref{as::directional-concavity} states $\mathbb{E}[c] \leq 0$; we report the mean
directional curvature. 

\subsection{Results}
\begin{figure}
    \centering
    \includegraphics[width=1.\linewidth]{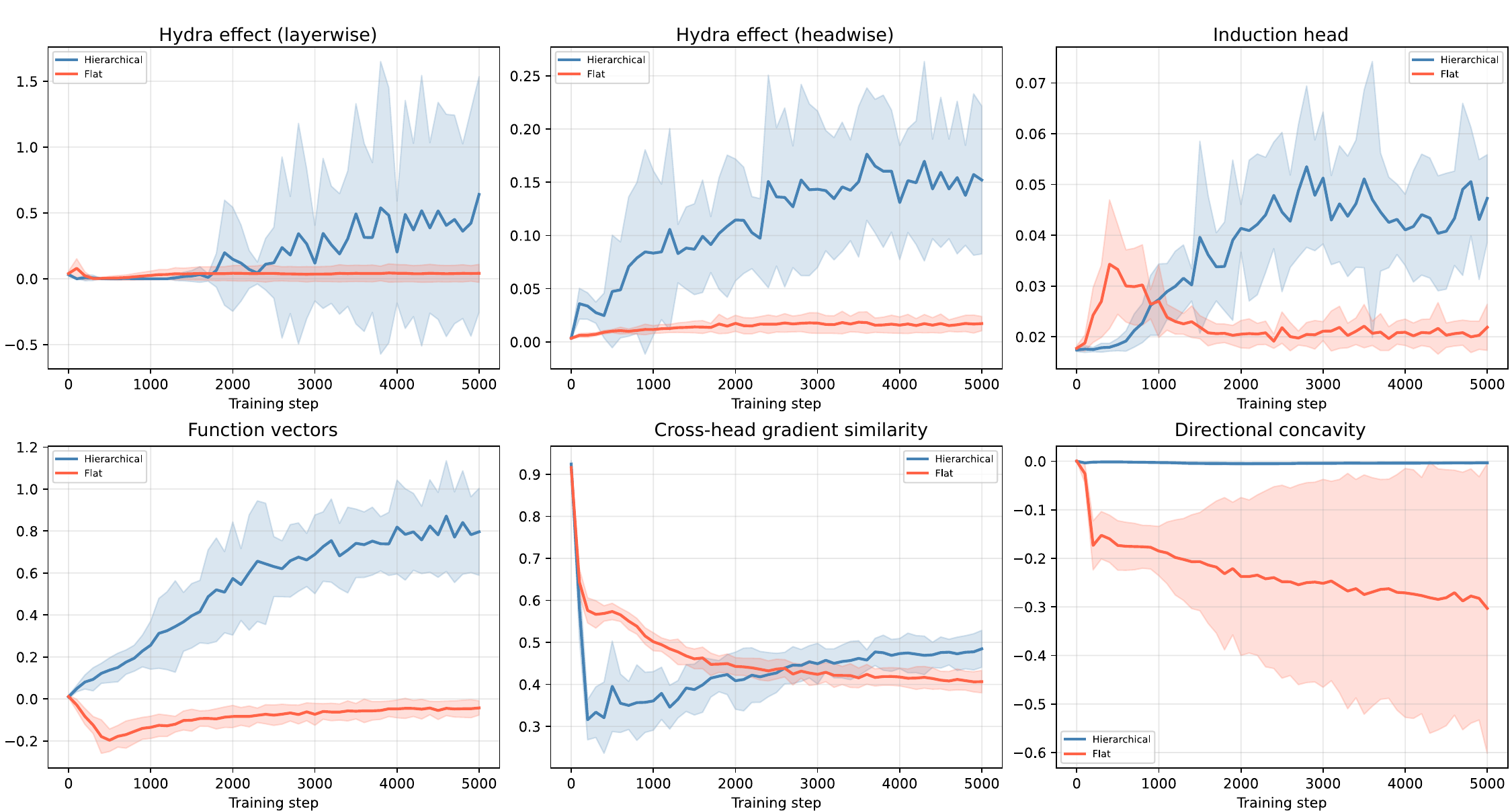}
    \caption{{\bf Emergence of Hydra effect, induction heads, and function vectors in toy models.} All three metrics show selective emergence in the hierarchical DGP against the flat counterpart.}
    \label{fig:toy-model}
\end{figure}

Figure~\ref{fig:toy-model} reports the trajectory of the different metrics in models trained on hierarchical and flat DGPs. Headwise Hydra effect, induction heads, and function vectors emerge
selectively under hierarchical structure.  All three metrics yield
consistently and substantially higher values for the hierarchical DGP across
the entire training run, with
near-zero counterparts in the flat DGP. The distributional separation at the end of training is unambiguous: for all three metrics across all three
aggregation windows (final, peak, AUC), paired t-tests and Wilcoxon
signed-rank tests reject the null at $p<0.001$, and all comparisons survive
Bonferroni correction over the full family of tests (see Figures~\ref{fig:metric-distribution} and \ref{fig:pvalues} in Appendix).
Effect sizes are large throughout, ranging from $d=1.82$ for headwise Hydra
(final) to $d=4.46$ for the function vector AUC, indicating that the
hierarchical condition reliably produces these phenomena across seeds rather
than in isolated runs.

In case of layerwise Hydra effect, the signal is weaker (one-sided significance of $p<0.05$ does not survive Bonferroni correction). We attribute the difference to the coarser resolution
of layer-level ablation: zeroing an entire layer removes the contributions of
multiple heads simultaneously, including heads that serve the same latent stream
and whose ablations may partially cancel in their downstream effect.  Headwise
ablation isolates individual functional units and therefore provides a cleaner
signal of the redundant specialisation predicted by Theorem 4.  The layerwise
metric remains a useful coarse indicator but is a less sensitive operationalisation
of the theoretical construct than its headwise counterpart.

Probing the gradient decorrelation and directional concavity confirms the validity of Assumptions~\ref{ass:low_interference} and \ref{as::directional-concavity} on both model variants. The minimum cross-head gradient cosine similarity drops sharply in the
hierarchical models at approximately the same training step at which the Hydra
and induction scores rise. Note that these two properties of gradient descent and representation geometry, though necessary for Hydra effect to emerge, are not sufficient. In the flat variant, there is no sign of Hydra effect despite a more negative directional curvature and comparable gradient decorrelation. This establishes the absolute necessity of hierarchy as the X-factor to explain induction heads, function vectors, and Hydra effect.

\section{Comparative Validation on Large Models}
\label{sec:large-models}

We design an experiment that isolates the effect of structure in training data while closely matching the surface statistics of natural language. We construct two data generation processes that produce text with similar token distributions, vocabulary usage, and short-range dependencies. One process contains an explicit underlying structure, while the other intentionally lacks such structure. We train identical language models on the resulting corpora using the same architecture, optimization method, and hyperparameters. At fixed intervals of $x$ training steps, we evaluate a predefined set of model properties. We then compare these measurements to those obtained from a model trained on real-world natural language data (OLMo-1B) to validate faithfulness. The pretraining configuration can be found in Appendix~\ref{app:pretraining-details}.
\subsection{Data generation process}
 Each process draws from a fixed vocabulary of size $V$, denoted by the token set $\mathcal{V} = \{0, 1, \ldots, V-1\}$, where \textsc{eos} is a special \underline{e}nd-\underline{o}f \underline{s}equence-symbol. See Appendix~\ref{app:pretraining-details} for full details.

{\bf N-gram.} The N-gram data generation process serves as a baseline because it represents the simplest form of generative structure, one that lacks hierarchy and recursion. It captures only local and sequential dependencies between tokens defined by a fixed history window. Sampling from a Zipf distribution preserves the statistical patterns characteristic of natural language. For an N-gram model of order $n$, the context $h$ consists of the sequence of the previous $n-1$ tokens. For each unique history $h = (w_1, \dots, w_{n-1}) \in \mathcal{V}^{n-1}$, we define a transition distribution over the next token $t$: \begin{equation} P(t | h) \propto \frac{1}{(t + 1)^{\alpha_h}} \end{equation} where the Zipf exponent $\alpha_h$ is sampled as $\alpha_h \sim \max\left(\mathcal{N}(\mu, \sigma^2), \alpha_{\min}\right)$ for each distinct history context. To generate a sentence $\mathbf{x} = (x_1, x_2, \ldots, x_L)$, we first determine the sentence length $L = \ell_{\min} + \epsilon, \quad \text{where } \epsilon \sim \text{Zipf}(s)$. We initialize the history $h_0$ with a set of padding tokens. Then for $i = 1, \ldots, L$: \begin{align} x_i &\sim P(\cdot | h_{i-1}) \ h_i &= \text{shift}(h_{i-1}, x_i) \end{align} where the update step $\text{shift}$ discards the oldest token in the history and appends $x_i$ to form the new context for step $i+1$. Finally, we append the \textsc{eos} token: $x_{L+1} = V - 1$.

{\bf PCSG.} Similar to \citet{Pseudo-PCSG}, we use a context-free backbone (i.e., a PCFG) and introduce context-dependent production rules at the very last stage while expanding non-terminal symbols to terminal symbols. Unlike regular grammars, which capture only local token dependencies, a PCFG introduces a hierarchical structure through nested production rules. This structure enables the generation of sentences with grammatical relationships between constituents, specifically subjects, verbs, and objects. We construct sequences using recursive non-terminal expansions that correspond to different levels (Appendix~\ref{app:cflreference}).  

The data generation process begins with a document symbol \( S \), which expands into multiple document segments \( D_i \). Each segment represents a hierarchical unit, such as a section or paragraph, that further decomposes into sentences. The \texttt{shuffle} operator applied at the document level randomizes the order of these units, preventing the model from overfitting to a fixed sequence order. Within each sentence, the production rules distinguish between two types: assertions and questions. Assertions combine subjects, verbs, and objects into simple or compound statements, while questions invert the order of these components to form interrogative patterns. This hierarchical design captures the syntactic relationships found in natural language, even though the tokens themselves remain abstract.  

Terminal symbols (\(\text{S}_i\), \(\text{V}_i\), and \(\text{O}_i\)) correspond to vocabulary items. Each set of terminals is sampled from a Zipf distribution to reflect the frequency imbalance typical of real linguistic data. Nonterminal symbols like \(\text{Subject}\), \(\text{Verb}\), and \(\text{Object}\) expand into these terminal choices. Context-sensitivity is introduced via permutation functions \(\mathrm{PermuteOrder}_{D_i}\), that assigns different terminal production probabilities across different documents. Formally, for a document $D_i$ and syntactic category $C$, let $\mathcal{V}_C$ be the standard ordered set of terminal symbols. The document-specific vocabulary ordering is defined by applying a random permutation $\pi_{D_i, C}$ to this set:
$$\mathrm{PermuteOrder}_{D_i, C}(\mathcal{V}_C) = \pi_{D_i, C}(\mathcal{V}_C)$$
where $\pi_{D_i, C}$ is a permutation uniformly sampled for each specific document and category. By randomly reordering $\mathcal{V}_C$ before sampling, the stationary rank-probabilities of the Zipf distribution are mapped to different terminal symbols for each document.

As opposed to a PCFG, a PCSG retains context-dependence similar to the n-gram model, and the sole differentiator becomes hierarchy in the latent organization.

\subsection{Metrics}

In this setup, we track the same metrics as Section~\ref{sec:toy-model}. However, to minimize compute, we only track layerwise Hydra effect across the checkpoints. We limit the scope of ablation by focusing on the change of predictive influence of a layer upon the ablation of the immediately preceding layer.

{\bf Parse-Tree Geometry.} To investigate whether the model's internal representation space encodes the hierarchical structure of the data, we employ a structural probe~\citep{hewitt-manning-2019-structural}. We learn a linear transformation $B \in \mathbb{R}^{d_\text{probe} \times d_{\text{model}}}$ such that the squared Euclidean distance between transformed vector representations $h_i, h_j$ approximates the distance between words $x_i, x_j$ in the ground-truth parse tree. Formally, the probe minimizes the deviation between the tree distance $d_{\text{tree}}(x_i, x_j)$ and the probe distance $d_B(h_i, h_j)^2 = (h_i - h_j)^T (B^T B) (h_i - h_j)$. The geometric alignment is quantified using the unlabeled undirected attachment score (UUAS), which measures the percentage of edges shared between the true parse tree and the minimum spanning tree constructed from the predicted distances $d_B$.

\subsection{Results}
\label{subsec:large-results}
\begin{figure}[!t]
    \centering
    \includegraphics[width=0.8\linewidth]{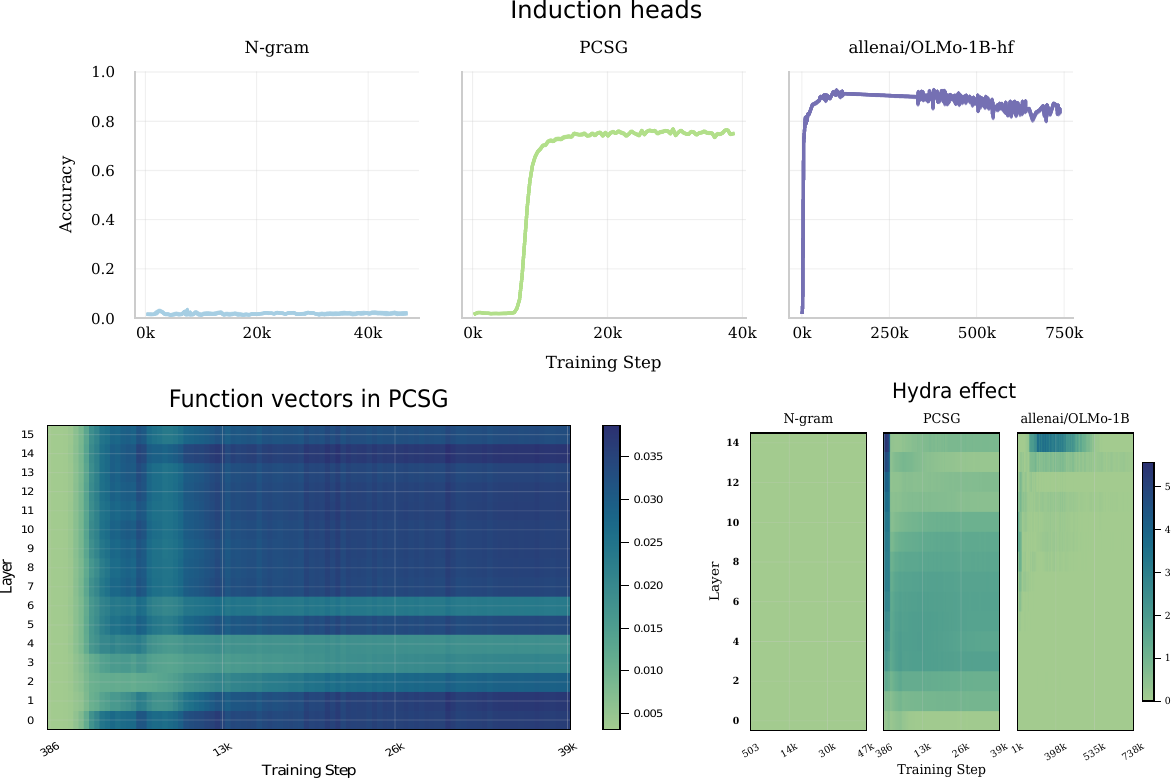}
    \caption{{\bf Development of induction heads, function vectors and Layerwise Hydra effect in synthetic and real models.} We do not show function vectors in n-gram model since it failed to demonstrate any in-context learning, and in OLMo-1B to avoid compute expense.}
    \label{fig:large-model}
\end{figure}


Figure~\ref{fig:large-model} summarizes the main results for across three models and three metrics. Confirming the theory and the toy model experiments, the flat n-gram model does not develop any of the phenomena. In contrast, the  PCSG shows a sharp increase in induction-related attention at approximately 6k training steps. 
Furthermore, its trajectory is comparable to that of the reference model, although it flattens out slightly earlier and is less unstable towards the end of training.
Interestingly, improvement from function vectors substantially increases after around 6k training steps. This is also the point at which the induction heads emerge. N-gram models are not shown due to no observed existence of function vector formation.

PCSG-trained model shows substantial signs of Hydra effect. In fact, the degree of compensation is higher than OLMo-1B. Notice that in the very early stage of training, both models show compensation in the deeper layers. After training, the effect localizes in the middle layers.

\begin{figure}[!t]
    \centering
    \begin{subfigure}[c]{0.48\textwidth}
        \centering
        \includegraphics[
            width=\linewidth,
            height=0.32\textheight,
            keepaspectratio
        ]{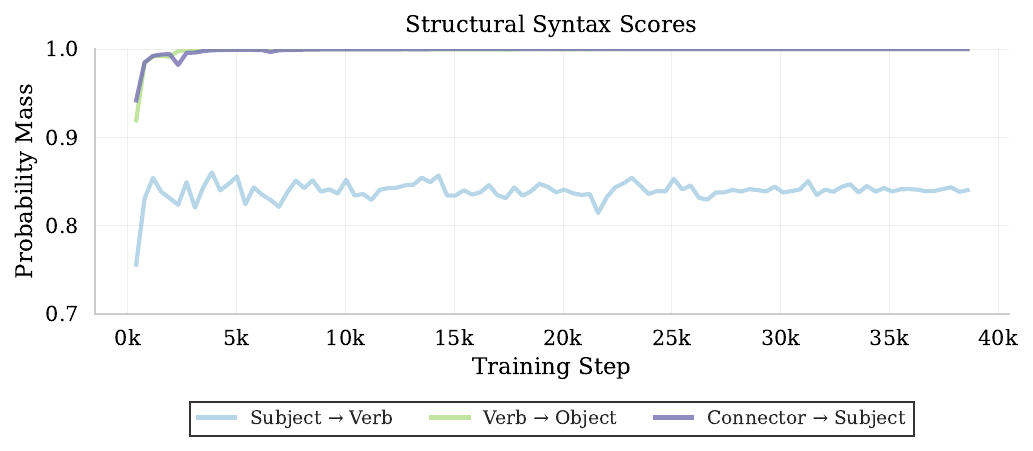}
        \label{fig:parsetree_geo_a}
    \end{subfigure}
    \hfill
    \begin{subfigure}[c]{0.48\textwidth}
        \centering
        \includegraphics[
            width=\linewidth,
            height=0.32\textheight,
            keepaspectratio
        ]{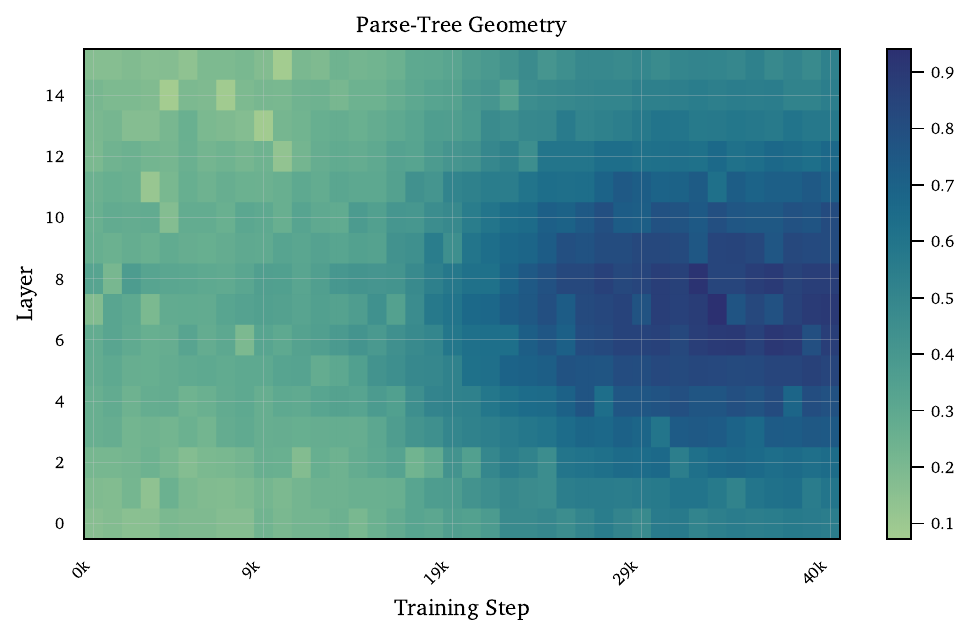}
        \label{fig:parsetree_geo_b}
    \end{subfigure}
\vspace{-5mm}    \caption{\textbf{Hierarchy is internalized in stages during training.} (a) Probability mass towards the next exclusively valid tokens saturates in the beginning, corresponding to shallower hierarchy learning. (b) Layer-wise structural probe accuracy rapidly improves after a substantial training (deep hierarchy learning).}
    \label{fig:parsetree_geo}
\end{figure}

To validate that the model internalizes the generative topology, we first analyze the probability mass assigned to all syntax-valid tokens (Figure~\ref{fig:parsetree_geo}~(a)) and find that this shallow syntax is already learned early in training, at around 4k steps. Next, to get an understanding of the more abstract syntax, we view the geometric alignment between the representation space and the ground-truth parse trees. Figure~\ref{fig:parsetree_geo}~(b) displays the layerwise development of the unlabeled undirected attachment score. We observe that the model progressively learns to map the Euclidean distance of its internal representations to the true tree distance of the generative grammar. This emergence is highly layer-dependent: intermediate layers (layers 5--10) exhibit the strongest structural alignment, reaching a UUAS of approximately 0.9, while earlier and later layers remain less structured. This localization suggests a dedicated syntactic processing stage within the network depth. These two observed shifts in hierarchy learning are also reflected in the loss trajectory (see Figure~\ref{fig:training_curves}): two sharp dips in loss correspond to the emergence of shallow and deep hierarchy representations.

\section{Discussion and Conclusion}

In this work, we seek to unify the development of three mechanistic phenomena, namely induction heads, function vectors, and the Hydra effect, under the lens of the data generation process and optimization dynamics. We identify that hierarchical latent structure in the data generation process can be indicated as a major driver for all three phenomena. Such unification of phenomena that on the surface look very different from each other can be a stepping stone towards building a more robust understanding of language models.
To facilitate a forward-moving discourse, we now highlight the overarching implications of our work.

{\bf Redundancy, interpretability, alignment.} Our theoretical results establish a strong connection between hierarchical structures in the data generation process and redundant distribution of predictive power in the model. Scaling model size and complex reasoning over increasing hierarchical structures, both being the stairway to powerful model, positively reinforce computation in redundancy. This makes intervention-based interpretability (ablate a variable and record its effect) extremely challenging~\citep{DBLP:journals/corr/abs-2602-07930}. This extends to safety alignment as well: unless an alignment method can replace \textit{all} possible realizations of a harmful capability, the model will remain misaligned.

{\bf Compositionality and emergence.} All three phenomena investigated in large model regime demonstrate \textit{emergent optimization}~\citep{lubana2024percolationmodelemergenceanalyzing, NEURIPS2023_9d0f188c} --- a drastic jump from a near-zero to a near-maximum value. Very often this behavior is associated with \textit{phase shift}. Such drastic dynamics have been linked to the compositionality of the associated computation~\citep{arora2023theoryemergencecomplexskills, lubana2024percolationmodelemergenceanalyzing}. This can be linked to learning hierarchical structures as well, both conceptually and empirically. It is natural to assume that the model's estimation of the posterior belief over shallower latents needs to be composed for the subsequent estimation of deeper latents, also evident in the internalized hierarchy in Section~\ref{subsec:large-results}. 

\section*{Acknowledgments}
This work was funded by the LOEWE Distinguished Chair “Ubiquitous Knowledge Processing”, LOEWE initiative, Hesse, Germany (Grant Number: LOEWE/4a//519/05/00.002(0002)/81), as well as by the German Federal Ministry of Education and Research and the Hessian Ministry of Higher Education, Research, Science and the Arts within their joint support of the National Research Center for Applied Cybersecurity ATHENE. Computational resources were provided by the German AI Service Center WestAI.
\bibliography{main}
\bibliographystyle{tmlr}

\appendix

\section{Complete proofs}

\subsection{Proof of Theorem~\ref{thm:additive_evidence}}
\label{app::proof::additive}
\begin{proof}

Starting from the assumed factorization,

\[
p(z^{(0)}, z^{(1)}_{1:m}, x_{1:m})
=p(z^{(0)})
\prod_{i=1}^{m}
p(z_i^{(1)} \mid z^{(0)})
p(x_i \mid z_i^{(1)}, z^{(0)}).
\]

Conditioning on \(z^{(0)}\) and marginalizing the local latents gives

\[
p(x_{1:m}\mid z^{(0)})
=
\sum_{z^{(1)}_{1:m}}
\prod_{i=1}^{m}
p(z_i^{(1)} \mid z^{(0)})
p(x_i \mid z_i^{(1)}, z^{(0)}).
\]

Because the summand factorizes over \(i\), the sum separates:

\[
p(x_{1:m}\mid z^{(0)})
=
\prod_{i=1}^{m}
\left(
\sum_{z_i^{(1)}}
p(z_i^{(1)} \mid z^{(0)})
p(x_i \mid z_i^{(1)}, z^{(0)})
\right).
\]

Applying Bayes' rule,

\[
p(z^{(0)} \mid x_{1:m})
=
\frac{
p(z^{(0)})
p(x_{1:m}\mid z^{(0)})
}{
p(x_{1:m})
}
\]

Substituting the previous expression yields

\[
p(z^{(0)} \mid x_{1:m})
=
\frac{
p(z^{(0)})
\prod_{i=1}^{m}
\left(
\sum_{z_i^{(1)}}
p(z_i^{(1)} \mid z^{(0)})
p(x_i \mid z_i^{(1)}, z^{(0)})
\right)
}{
p(x_{1:m})
}
\]

Taking logarithms gives

\[
\log p(z^{(0)} \mid x_{1:m})
=
\log p(z^{(0)})
+
\sum_{i=1}^{m}
\log
\left(
\sum_{z_i^{(1)}}
p(z_i^{(1)} \mid z^{(0)})
p(x_i \mid z_i^{(1)}, z^{(0)})
\right)
-
\log p(x_{1:m}).
\]

Recognizing the definition of \(e_i\),

\[
e_i(z^{(0)},x_i)
=
\log
\left(
\sum_{z_i^{(1)}}
p(z_i^{(1)} \mid z^{(0)})
p(x_i \mid z_i^{(1)}, z^{(0)})
\right),
\]

we obtain

\[
\log p(z^{(0)} \mid x_{1:m})
=
\log p(z^{(0)})
+
\sum_{i=1}^{m}
e_i(z^{(0)},x_i)
-
\log p(x_{1:m}),
\]

which proves the result.
\end{proof}


\subsection{Proof of Theorem~\ref{thm:fisher_decoupling}}
\label{app::subsec::fisher}

\begin{proof}

Expanding the score decomposition gives

\[
u
=
\sum_i u_i.
\]

Therefore

\[
F
=
\mathbb E
\left[
\left(
\sum_i u_i
\right)
\left(
\sum_j u_j
\right)^\top
\right].
\]

Expanding the product,

\[
F
=
\sum_i
\mathbb E[u_i u_i^\top]
+
\sum_{i\neq j}
\mathbb E[u_i u_j^\top].
\]

Define

\[
F_i
=
\mathbb E[u_i u_i^\top]
\]

and

\[
R
=
\sum_{i\neq j}
\mathbb E[u_i u_j^\top].
\]

It remains to bound $R$.

By Assumption~\ref{ass:low_interference},

\[
|u_i^\top u_j|
\le
\varepsilon
\|u_i\|
\|u_j\|.
\]

Taking expectations,

\[
\left|
\mathbb E[u_i^\top u_j]
\right|
\le
\varepsilon
\mathbb E
\left[
\|u_i\|
\|u_j\|
\right].
\]

Applying Cauchy--Schwarz,

\[
\mathbb E
\left[
\|u_i\|
\|u_j\|
\right]
\le
\sqrt{
\mathbb E[\|u_i\|^2]
\,
\mathbb E[\|u_j\|^2]
}.
\]

Therefore

\[
\left|
\mathbb E[u_i^\top u_j]
\right|
\le
\varepsilon
\sqrt{
\mathbb E[\|u_i\|^2]
\,
\mathbb E[\|u_j\|^2]
}.
\]

Summing over all off-diagonal pairs yields

\[
\|R\|
\le
\varepsilon
\sum_{i\neq j}
\sqrt{
\mathbb E[\|u_i\|^2]
\,
\mathbb E[\|u_j\|^2]
}.
\]

Hence

\[
F
=
\sum_i F_i
+
O(\varepsilon).
\]

Taking the limit $\varepsilon\to0$ gives

\[
F
\rightarrow
\sum_i F_i,
\]

which is block diagonal across evidence streams.

\end{proof}

\subsection{Proof of Theorem~\ref{th::hydra-effect}}
\label{app::proof::hydra}

\begin{proof}
Define the scalar function $\phi(\mu) := \langle g(\mu), w_t \rangle$. Let
\[
\nu := \mu_2 + r,
\]
so that
\[
\Delta = \phi(\nu + \mu_1), \quad \tilde{\Delta}_{k_1} = \phi(\nu).
\]

By Corollary~\ref{cor:specialization} and Assumption~\ref{as::multi-evidence}, the representations $\mu_1$ and $\mu_2$ are functions of disjoint evidence streams $X_{S_1}$ and $X_{S_2}$, respectively, and are conditionally independent given the global latent $Z^{(0)}$. Moreover, both $\mu_1$ and $\mu_2$ serve as estimators of the same latent variable $Z^{(0)}$.

Conditioned on $Z^{(0)}$ and $\nu$, the contribution $\mu_1$ represents an independent estimate of the same underlying signal already partially captured by $\nu$. Therefore, its residual contribution satisfies
\[
\mathbb{E}[\mu_1 \mid Z^{(0)}, \nu] \approx 0.
\]

Using a second-order Taylor expansion of $\phi$ around $\nu$, we obtain
\[
\phi(\nu + \mu_1)
= \phi(\nu)
+ \nabla \phi(\nu)^\top \mu_1
+ \frac{1}{2} \mu_1^\top H_\phi(\nu)\, \mu_1
+ R_3,
\]
where $R_3$ denotes higher-order terms.

Taking conditional expectation given $Z^{(0)}$ and $v$, the first-order term vanishes:
\[
\mathbb{E}[\nabla \phi(\nu)^\top \mu_1 \mid Z^{(0)}, \nu] = \nabla \phi(\nu)^\top \mathbb{E}[\mu_1 \mid Z^{(0)}, \nu] \approx 0
\]

Thus,
\[
\mathbb{E}[\phi(\nu + \mu_1) \mid Z^{(0)}, \nu]
\approx \phi(\nu) + \frac{1}{2} \mathbb{E}\left[\mu_1^\top H_\phi(\nu)\, \mu_1 \mid Z^{(0)}, \nu\right].
\]

By Assumption~\ref{as::directional-concavity}, the second-order term is non-positive:
\[
\mathbb{E}\left[\mu_1^\top H_\phi(\nu)\, \mu_1 \mid Z^{(0)}, \nu\right] \le 0
\]

Hence,
\[
\mathbb{E}[\phi(\nu + \mu_1) \mid Z^{(0)}, \nu] \le \phi(\nu)
\]

Taking expectation over $Z^{(0)}$ and $\nu$, we obtain
\[
\mathbb{E}[\phi(\nu + \mu_1)] \le \mathbb{E}[\phi(\nu)],
\]
i.e.,
\[
\mathbb{E}[\Delta] \le \mathbb{E}[\tilde{\Delta}_{k_1}].
\]

This completes the proof.
\end{proof}

\section{Toy Model Details}
\label{app:toy-models}

\subsection{DGP and Model Configuration}
\label{app:config}

The model uses Rotary Position Embeddings (RoPE) and pre-layer
normalization, with $L_\mathrm{model}$ layers, $H$ attention heads per layer,
and model dimension $D$.  All hyperparameters — architecture, optimizer,
learning rate schedule, and training budget — are held fixed across both DGP
conditions.  The loss mask excludes $[\textsc{Query}]$ cue positions so that
the training signal derives entirely from evidence and query tokens.

\subsection{Statistical Testing}
\label{app:stat-test}

We assess whether each mechanistic effect and assumption probe is
selectively and significantly stronger for models trained on the hierarchical
DGP than for those trained on the flat DGP.

\paragraph{Experimental protocol}  We train $N_\mathrm{seeds}$ independent pairs of
models, one per DGP, sharing the same random seed within each pair but differing
in DGP type.  Each pair constitutes a matched observation, eliminating
seed-level variance from the comparison.  At $E$ evenly spaced evaluation
checkpoints during training, we record all metrics described in
Section~\ref{sec:toy-model}.

\paragraph{Aggregation} For each metric time series $\{m_e\}_{e=1}^E$, we compute
three scalar summaries: (i) the *final* value at the last checkpoint, (ii) the
peak value $\max_e m_e$ over training, and (iii) the area under the curve
(AUC) $\frac{1}{E_\mathrm{final} - E_1} \int m(e)\,\mathrm{d}e$ (approximated
via the trapezoidal rule and normalized by the training span).  Testing all
three windows guards against timing differences in the emergence of each
phenomenon.

For each metric, the hypothesis is:
  $H_1$: hierarchical $>$ flat.

\paragraph{Statistical tests} For each metric–window combination we apply two
one-sided paired tests across the $N_\mathrm{seeds}$ matched pairs: a paired
$t$-test (parametric) and the Wilcoxon signed-rank test (non-parametric).  We
use one-sided tests because the direction of each effect is dictated by theory;
two-sided $p$-values are also reported for completeness.  Effect sizes are
quantified by Cohen's $d$ in the paired formulation,
$d = \bar{\delta} / s_\delta$,
where $\bar{\delta}$ is the mean within-pair difference and $s_\delta$ its
standard deviation.  Multiple-comparison correction is applied via Bonferroni
over all metric–window tests simultaneously; we report both corrected and
uncorrected significance decisions.


\begin{table}[!t]
\centering
\caption{%
  Complete hyperparameter specification for the toy experiments.
  \emph{Hierarchical only} marks parameters that apply exclusively to
  \textsc{HierarchicalDGP}; all other parameters are shared between both
  data generation processes unless otherwise noted.
}
\label{tab:hyperparameters}
\setlength{\tabcolsep}{4pt}
\renewcommand{\arraystretch}{1.15}
%
%
\begin{tabular}{@{} l l @{\hspace{14pt}} l l @{}}
\toprule
\textbf{Parameter} & \textbf{Value} &
\textbf{Parameter} & \textbf{Value} \\
\midrule

\multicolumn{2}{@{}l}{\textit{Data generation (shared)}} &
\multicolumn{2}{l}{\textit{Data generation (hierarchical only)}} \\[2pt]

Vocabulary size $V$          & $64$  &
  Global latent states $K_0$            & $4$        \\

Sequence length $n$          & $128$ &
  Local latent states $K_1$             & $16$       \\

Number of segments $S$       & $16$  &
  $\pi_1$ concentration                 & $30.0$     \\

Segment length $L = n/S$     & $8$   &
  Embedding noise $\sigma$              & $0.1$      \\

Embedding scale              & $4.0$ &
  Query mixing weight $\alpha_q$        & $0.8$      \\

Evidence tokens per segment  & $6$ $(= L - 2)$ &
  Min.\ Bayes gap $\Delta H$            & $0.08$ nats \\

\midrule

\multicolumn{2}{@{}l}{\textit{Model architecture}} &
\multicolumn{2}{l}{\textit{Optimiser}} \\[2pt]

Model dimension $D$          & $64$     &
  Optimiser                              & AdamW     \\

Number of layers             & $6$       &
  Learning rate                          & $3 \times 10^{-4}$ \\

Attention heads $H$          & $8$       &
  $(\beta_1,\,\beta_2)$                 & $(0.90,\; 0.95)$   \\

Head dimension $D/H$         & $8$      &
  Weight decay                           & $0.1$     \\

FFN hidden ratio             & $4\times$ &
  Gradient clipping                      & $1.0$     \\

Position encoding            & RoPE      &
  LR warmup steps                        & $500$     \\

Max.\ context length         & $512$     &
  LR schedule                            & Constant  \\

\midrule

\multicolumn{2}{@{}l}{\textit{Training}} &
\multicolumn{2}{l}{\textit{Evaluation \& significance testing}} \\[2pt]

Training steps               & $5{,}000$              &
  Test set size                          & $512$           \\

Batch size                   & $64$                   &
  Evaluation interval                    & $200$ steps     \\

Sequence sampling            & Online (w/ replacement) &
  Independent seeds $N$                  & $10$            \\

Model vocab size             & $65$ $(V + 1)$         &
  Gradient samples                   & $256$           \\



                             &                        &
  A6 Hessian samples                     & $32$            \\

                             &                        &
  Statistical tests                      & Paired $t$, Wilcoxon \\

                             &                        &
  Correction         & Bonferroni      \\

                             &                        &
  Hypothesis direction                   & One-sided       \\

\bottomrule
\end{tabular}
\end{table}

\begin{figure}[!t]
    \centering
    \includegraphics[width=1.\linewidth]{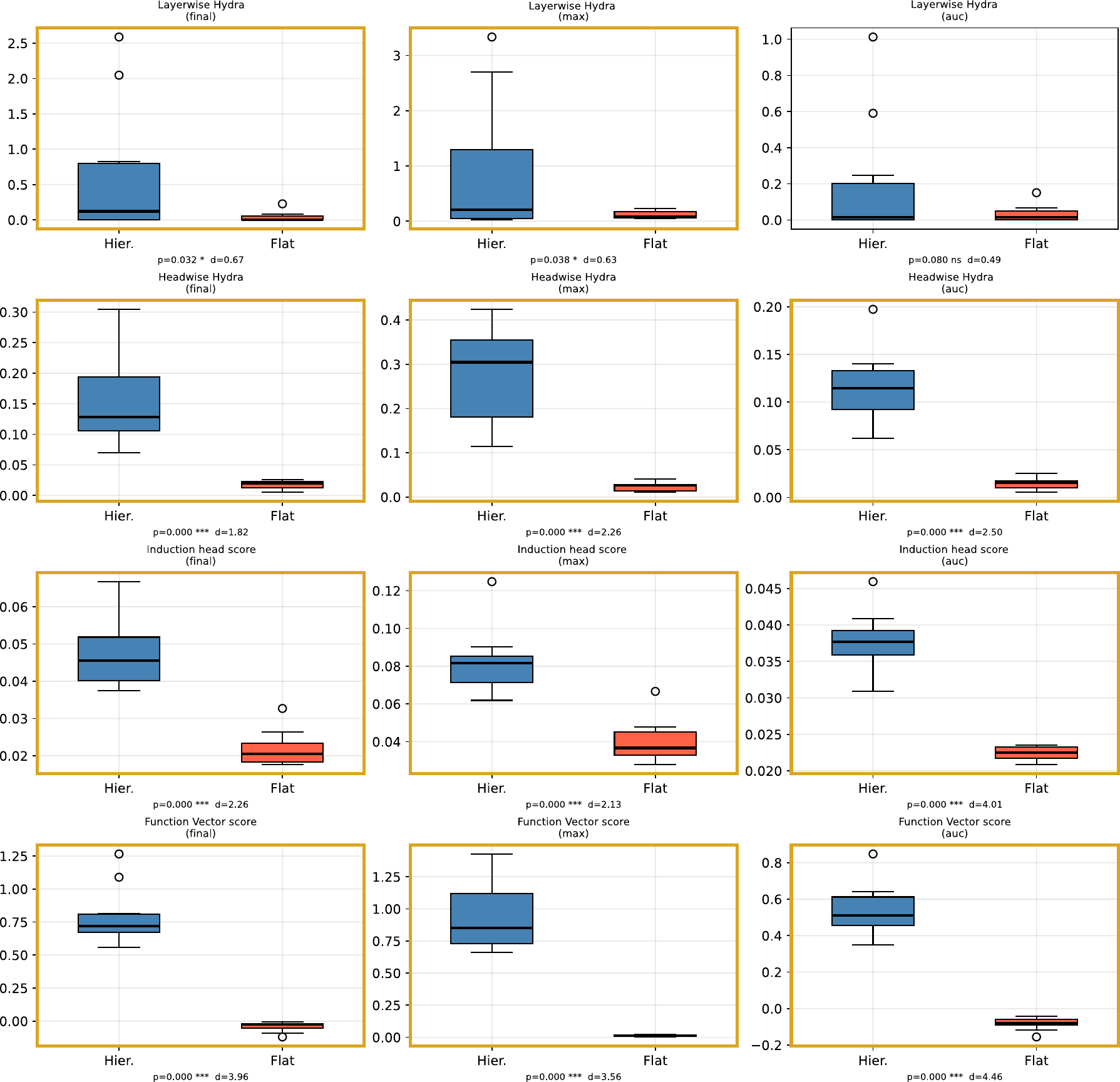}
    \caption{{\bf Distribution of different metrics tracked.} Gold bordered subplots are statistically significant ($p<0.05$ with one-sided t-test).}
    \label{fig:metric-distribution}
\end{figure}
\begin{figure}[!t]
    \centering
    \includegraphics[width=1.\linewidth]{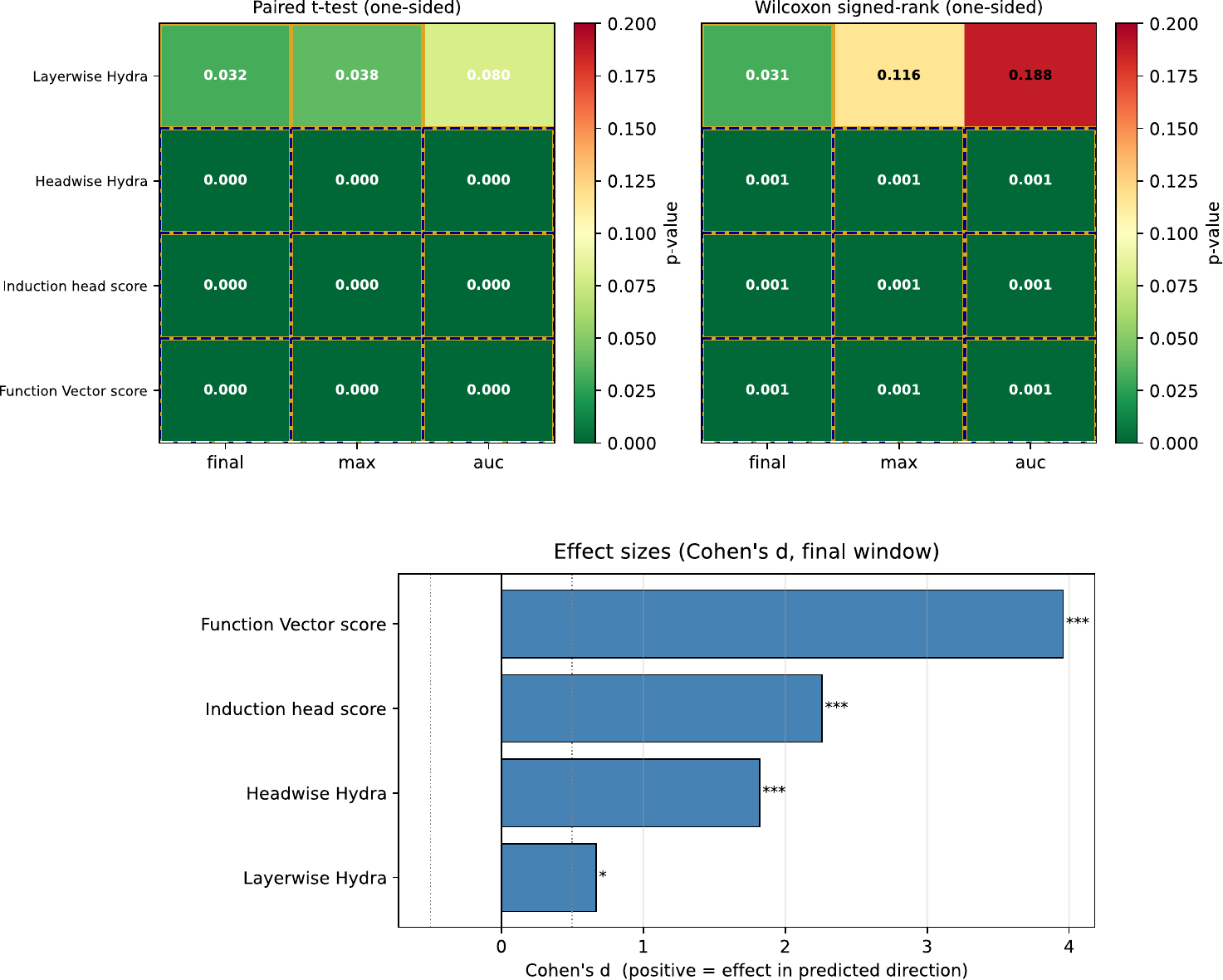}
    \caption{{\bf Significance test results and effect sizes across four metrics.} Gold bordered subplots are statistically significant ($p<0.05$ with one-sided t-test). Navy-dashed subplots survive significance after Bonferroni correction.}
    \label{fig:pvalues}
\end{figure}

\section{Generative Processes Details}

\subsection{N-gram}
\begin{table}[h]
\centering
\begin{tabular}{ll}
\hline
\multicolumn{2}{c}{\textit{Parameters}} \\
\hline
Sentences & 400M \\
Zipf exponent & $\mathcal{N}(2.0, 1.2)$, min 1.2 \\
Length exponent & 2.0 \\
Sequence length & 10–1010 \\
\end{tabular}
\caption{N-gram data generation configuration.}
\label{tab:bigramconfig}
\end{table}
Table~\ref{tab:bigramconfig} shows the parameterization for the N-gram data generation process.
\subsection{PCSG}
\label{app:cflreference}
\begin{table}[h]
\centering
\begin{tabular}{ll}
\hline
\multicolumn{2}{c}{\textit{Parameters}} \\
\hline
Documents & 6.5M \\
Document Repetitions & 10 \\
Sections per Document & 10 \\
Paragraphs per Section & 20 \\
Zipf Exponent & 1.0 \\
Subject Percentage & 0.3 \\
Object Percentage & 0.3 \\
Verb Percentage & 0.3 \\
Connector Percentage & 0.1 \\
\end{tabular}
\caption{PCSG data generation configuration.}
\label{tab:pcfgconfig}
\end{table}

The PCSG production rules are defined in Figure~\ref{fig:grammar} and parameterized as per Table~\ref{tab:pcfgconfig}.

\section{Model \& Training Specifications}
\label{app:pretraining-details}
\begin{table}
\centering
\label{tab:llama2-13b}
\begin{tabular}{ll}
\hline
\multicolumn{2}{c}{\textit{Architecture}} \\
\hline
Parameters & 17M \\
Dimension & 256 \\
Layers  & 16 \\
Attention Heads & 4 \\
Vocabulary Size & 1000 \\
Context Length & 512 \\
\hline
\multicolumn{2}{c}{\textit{Training Details}} \\
\hline
Training Tokens & 10B \\
Learning Rate & $3\times10^{-4}$ \\
Batch Size & 1024 \\
Warmup Steps & 2000 \\
Weight Decay & 0.1 \\
Gradient Clipping & 1.0 \\
\end{tabular}
\caption{LLaMA-based 17M model configuration.}
\end{table}

This appendix details the specifications of our custom model based on the LLaMA 2 architecture~\citep{touvronLlama2Open2023}. It follows the same architectural design but uses adjusted configuration values derived from an informed, proportional downscaling relative to \texttt{allenai/OLMo-1B-hf} to maintain a comparable performance trajectory. We used four H100 GPUs; the total amount required for training was one day. Analysis was done using the \texttt{nnsight} library \citep{fiottokaufman2024nnsightndifdemocratizingaccess}.

\subsection{Loss Trajectories}
\begin{figure*}[!t]
    \centering
    \includegraphics[width=\textwidth]{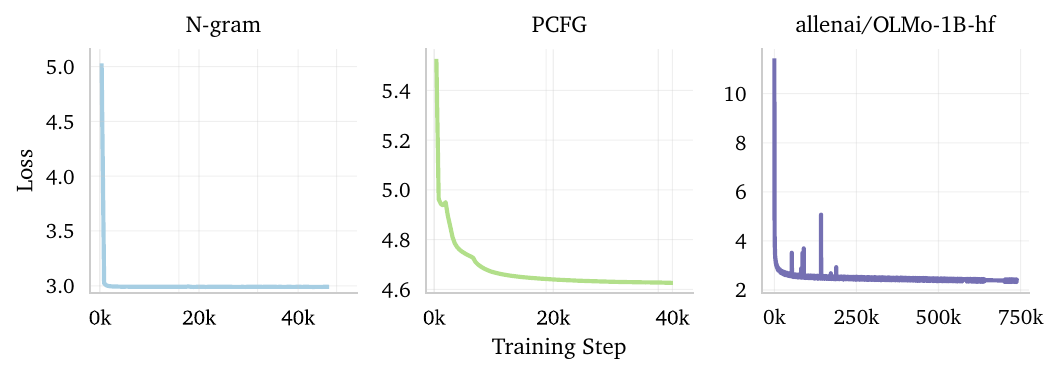}
    \caption{\textbf{Training loss across training.} The models exhibit power-law loss decay throughout training. The N-gram model shows an initial plateau followed by two subsequent drops, whereas the other two models display a steady power-law decline.}
    \label{fig:training_curves}
\end{figure*}

Figure~\ref{fig:training_curves} shows that training loss for the model based on the N-gram data generation, the PCFG data generation, and the reference model all follow a power-law trajectory. Interestingly, the PCFG shows a sharp decrease after an initial flat line. Overall, the Figure shows that the loss trajectories across all three settings are comparable, thereby establishing a baseline for subsequent analyses. Without this quantitative alignment, any proceeding comparison, such as generalization behavior, would be limited, as significant deviations in loss dynamics would indicate that the models had not reached comparable representational regimes. Thus, these results serve as a sanity check confirming that all models exhibit similar convergence behavior.
\begin{figure*}[!t]
\centering
\begin{grammar}
  \firstcase{S}{\mathrm{shuffle}\bigl(\nonterm{D}_1^{A}\;\cdots\;\nonterm{D}_P^{A}\bigr)}{}

  \firstcasesubtil{\ensuremath{\nontermsubtil{D}_i}}{\nontermsubtil{Section}^B}{}

  \firstcasesubtil{\ensuremath{\nontermsubtil{Section}}}{\nontermsubtil{Paragraph}^C}{}

  \firstcasesubtil{\ensuremath{\nontermsubtil{Paragraph}}}{\nontermsubtil{Sentence}^D}{}
  \firstcasesubtil{\ensuremath{\nontermsubtil{Sentence}}}{\nontermsubtil{SentenceType}\; \text{EOS}}{}

  \firstcasesubtil{\ensuremath{\nontermsubtil{SentenceType}}}{\nontermsubtil{Assertion} \gralt \nontermsubtil{Question}}{}

  \firstcasesubtil{\ensuremath{\nontermsubtil{Assertion}}}
    {\nontermsubtil{Subject} \;\nontermsubtil{Verb} \;\nontermsubtil{Object} \gralt \nontermsubtil{Assertion} \; \nontermsubtil{Connector} \; \nontermsubtil{Assertion}}{}

  \firstcasesubtil{\ensuremath{\nontermsubtil{Question}}}
    {\nontermsubtil{Verb} \;\nontermsubtil{Subject} \;\nontermsubtil{Object}}{}

  \firstcasesubtil{\ensuremath{\nontermsubtil{Subject}(D_i)}}
    {\mathrm{PermuteOrder}_{D_i}(\nontermsubtil{Subject})}{}
  \firstcasesubtil{\ensuremath{\nontermsubtil{Subject}}}
    {\mathrm{Zipf}(\text{S\textsubscript{1}} \cdots \text{S\textsubscript{E}}) \longrightarrow
    \text{S\textsubscript{1}} \gralt \cdots \gralt \text{S\textsubscript{E}}}{}

  \firstcasesubtil{\ensuremath{\nontermsubtil{Verb}(D_i)}}
    {\mathrm{PermuteOrder}_{D_i}(\nontermsubtil{Verb})}{}
  \firstcasesubtil{\ensuremath{\nontermsubtil{Verb}}}
    {\mathrm{Zipf}(\text{V\textsubscript{1}} \cdots \text{V\textsubscript{F}}) \longrightarrow
    \text{V\textsubscript{1}} \gralt \cdots \gralt \text{V\textsubscript{F}}}{}

  \firstcasesubtil{\ensuremath{\nontermsubtil{Object}(D_i)}}
    {\mathrm{PermuteOrder}_{D_i}(\nontermsubtil{Object})}{}
  \firstcasesubtil{\ensuremath{\nontermsubtil{Object}}}
    {\mathrm{Zipf}(\text{O\textsubscript{1}} \cdots \text{O\textsubscript{G}}) \longrightarrow
    \text{O\textsubscript{1}} \gralt \cdots \gralt \text{O\textsubscript{G}}}{}

  \firstcasesubtil{\ensuremath{\nontermsubtil{Connector}}}
    {\text{C\textsubscript{1}} \gralt \cdots \gralt \text{C\textsubscript{H}}}{}
\end{grammar}

\begin{minipage}[t]{0.32\textwidth}
\begin{algorithm}[H]
\small
\setlength{\algomargin}{1em}
\DontPrintSemicolon
{%
  \renewcommand{\algorithmcfname}{}%
  \renewcommand{\thealgocf}{}%
  \SetAlgoCaptionSeparator{}%
  \caption{shuffle($X_1,\dots,X_P$)}%
}

$\pi \leftarrow$  $\{1,\dots,P\}$\;

\Return{$X_{\pi(1)},\dots,X_{\pi(P)}$}\;
\end{algorithm}
\end{minipage}
\hfill
\begin{minipage}[t]{0.32\textwidth}
\begin{algorithm}[H]
\small
\setlength{\algomargin}{1em}
\DontPrintSemicolon
{%
  \renewcommand{\algorithmcfname}{}%
  \renewcommand{\thealgocf}{}%
  \SetAlgoCaptionSeparator{}%
  \caption{PermuteOrder$_{D_i}$($X_1,\dots,X_n$)}%
}

$\sigma \leftarrow$ permutation rule for $D_i$\;

\Return{$X_{\sigma(1)},\dots,X_{\sigma(n)}$}\;
\end{algorithm}
\end{minipage}
\hfill
\begin{minipage}[t]{0.32\textwidth}
\begin{algorithm}[H]
\small
\setlength{\algomargin}{1em}
\DontPrintSemicolon
{%
  \renewcommand{\algorithmcfname}{}%
  \renewcommand{\thealgocf}{}%
  \SetAlgoCaptionSeparator{}%
  \caption{\\Zipf($T_1,\dots,T_E,\alpha$)}%
}

Sample $T_k \sim p_k$\;

\Return{$T_k$}\;
\end{algorithm}
\end{minipage}
\caption{Production rules of the PCSG.}
\label{fig:grammar}

\end{figure*}

\end{document}